\documentclass[preprint,12pt]{colt2026} 

\title[Tight Long-Term Tail Decay of (Clipped) SGD in Non-Convex Optimization]{Tight Long-Term Tail Decay of (Clipped) SGD in \\ Non-Convex Optimization}
\usepackage{times}
\usepackage{nicefrac}
\usepackage{microtype}
\usepackage{graphicx}
\usepackage{subcaption}
\usepackage{booktabs} 

\usepackage{amsmath}
\usepackage{amssymb}
\usepackage{mathtools}
\usepackage{comment}
\usepackage{dsfont}
\usepackage{gensymb}
\usepackage[shortlabels]{enumitem}
\usepackage{nicefrac}
\usepackage{threeparttable}
\usepackage{changepage}
\usepackage{multirow}
\usepackage{caption}
\usepackage{algorithm}
\usepackage{algorithmic}
\newtheorem{assumption}{Assumption}

\coltauthor{%
 \Name{Aleksandar Armacki} \Email{aleksandar.armacki@epfl.ch}\\
 \addr {\'E}cole Polytechnique F{\'e}d{\'e}rale de Lausanne, Lausanne, Switzerland
 \AND
 \Name{Dragana Bajovi{\'c}} \Email{dbajovic@uns.ac.rs}\\
 \addr Faculty of Technical Sciences, University of Novi Sad, Novi Sad, Serbia
 \AND
 \Name{Du{\v s}an Jakoveti{\'c}} \Email{dusan.jakovetic@dmi.uns.ac.rs}\\
 \addr Faculty of Sciences, University of Novi Sad, Novi Sad, Serbia
 \AND
 \Name{Soummya Kar} \Email{soummyak@andrew.cmu.edu}\\
 \addr Carnegie Mellon University, Pittsburgh, Pennsylvania, USA
 \AND
 \Name{{Ali H.} Sayed} \Email{ali.sayed@epfl.ch}\\
 \addr {\'E}cole Polytechnique F{\'e}d{\'e}rale de Lausanne, Lausanne, Switzerland%
}

\usepackage{xspace}

\newcommand{\lp}{\left(}
\newcommand{\rp}{\right)}

\newcommand{\lbr}{\left[}
\newcommand{\rbr}{\right]}

\usepackage{xcolor}

\usepackage{tikz}
\usetikzlibrary{calc,trees,positioning,arrows,chains,shapes.geometric,%
	decorations.pathreplacing,decorations.pathmorphing,shapes,%
	matrix,shapes.symbols}

\tikzstyle{startstop} = [rectangle, draw, rounded corners, align=center, minimum width=3cm, minimum height=1cm,text centered]
\tikzstyle{decision} = [diamond, draw, fill=blue!20, 
text width=4.5em, text badly centered, node distance=3cm, inner sep=0pt]
\tikzstyle{block} = [rectangle, draw, fill=blue!10, align=center, rounded corners, minimum width=3cm, minimum height=1cm]
\tikzstyle{blockcast} = [rectangle, draw, fill=red!10, align=center, rounded corners, minimum width=3cm, minimum height=0.45cm]
\tikzstyle{line} = [draw, -latex']
\tikzstyle{cloud} = [draw, ellipse,fill=red!20, node distance=3cm,
minimum height=2em]

\newcommand{\R}{\mathbb{R}}
\newcommand{\E}{\mathbb{E}}
\newcommand{\N}{\mathbb{N}}
\newcommand{\D}{\mathcal{D}}

\newcommand{\Prob}{\mathbb{P}}
\newcommand{\sgd}{\textup{\textbf{\texttt{SGD}}}\xspace}
\newcommand{\csgd}{\textup{\textbf{\texttt{c-SGD}}}\xspace}
\newcommand{\sfo}{$\mathcal{SFO}$\xspace}

\newcommand{\bigO}{\mathcal{O}}

\begin{document}

\maketitle

\begin{abstract}%
  The study of tail behaviour of \sgd-induced processes has been attracting a lot of interest, due to offering strong guarantees with respect to individual runs of an algorithm. While many works provide high-probability guarantees, quantifying the error rate for a fixed probability threshold, there is a lack of work directly studying the probability of failure, i.e., quantifying the tail decay rate for a fixed error threshold. Moreover, existing results are of finite-time nature, limiting their ability to capture the true long-term tail decay which is more informative for modern learning models, typically trained for millions of iterations. Our work closes these gaps, by studying the long-term tail decay of \sgd-based methods through the lens of large deviations theory, establishing several strong results in the process. First, we provide an upper bound on the tails of the gradient norm-squared of the best iterate produced by (vanilla) \sgd, for non-convex costs and bounded noise, with long-term decay at rate $e^{-\frac{t}{\log(t)}}$. Next, we relax the noise assumption by considering clipped \sgd (\csgd) under heavy-tailed noise with bounded moment of order $p \in (1,2]$, showing an upper bound with long-term decay at rate $e^{-\frac{t^{\beta_p}}{\log(t)}}$, where $\beta_p = \frac{4(p-1)}{3p-2}$ for $p \in (1,2)$ and $e^{-\frac{t}{\log^2(t)}}$ for $p = 2$. Finally, we provide lower bounds on the tail decay, at rate $e^{-t}$, showing that our rates for both \sgd and \csgd are tight, up to \linebreak poly-logarithmic factors. Notably, our results demonstrate \emph{an order of magnitude faster} long-term tail \linebreak decay compared to existing work based on finite-time bounds, which show rates $e^{-\sqrt{t}}$ and $e^{-t^{\nicefrac{\beta_p}{2}}}$, $p \in (1,2]$, for \sgd and \csgd, respectively. As such, we uncover regimes where the tails decay \linebreak much faster than previously known, providing stronger long-term guarantees for individual runs.
\end{abstract}

\begin{keywords}%
  non-convex, sgd, clipping, heavy-tails, tail bounds, large deviations, lower bounds
\end{keywords}

\section{Introduction}

Non-convex optimization is an integral part of modern machine learning (ML), as many practical settings, such as training large language models (LLMs), e.g., \cite{zhang2022opt}, represent instances of the general problem of minimizing a non-convex cost $f: \R^d \mapsto \R$, given by
\begin{equation}\label{eq:problem}
    \min_{x \in \R^d} f(x).
\end{equation} In practice, the problem \eqref{eq:problem} is solved using iterative methods, typically based on the stochastic gradient descent (\sgd) algorithm \cite{robbins1951stochastic}, making the study of convergence guarantees of \sgd-based methods an integral part of ML theory. While classical guarantees, like mean-squared error (MSE) convergence, e.g., \cite{Sayed_inference}, offer important indicators of average performance across many runs, the advent of large-scale models, such as transformers and LLMs, has shifted the focus on guarantees with respect to an \emph{individual run} of an algorithm, as even a single training run of such models can be incredibly costly, both time- and resource-wise. This has led to an increased interest in \emph{tail probabilities} and results of type $\Prob\big(F_t > \epsilon_t\big) \leq \delta_t$, where $F_t \coloneqq \min_{k \in [t]}\|\nabla f(x_k)\|^2$ is the standard metric of interest for non-convex costs, $\{x_t\}_{t \in \N}$ is the sequence of models generated by an iterative method, while $\epsilon_t > 0$ and $\delta_t \in (0,1]$ are the \emph{error} and \emph{probability thresholds}, respectively. Conventional results quantify the tail behaviour for a \emph{fixed probability threshold} and \emph{decaying error threshold}, i.e., the goal is to show $\Prob\big(F_t > c(\delta)/n_t\big) \leq \delta$, for any fixed $\delta \in (0,1)$, where $n_t \rightarrow \infty$ is the \emph{decay rate} and $c(\delta)$ is a function of the probability threshold. Of particular interest are high-probability (HP) convergence results, where $c(\delta) = \log(\nicefrac{1}{\delta})$, e.g., \cite{nemirovski2009robust,ghadimi2013stochastic}, which can be used to establish sharp \emph{tail bounds}, i.e., \emph{exponentially decaying probability threshold} for a \emph{fixed error threshold}, of the form $\Prob(F_t > \epsilon) \leq e^{-n_t\epsilon}$, for any fixed $\epsilon > 0$ (obtained from HP bounds by solving $\frac{\log(\nicefrac{1}{\delta})}{n_t} = \epsilon$ for $\delta$). Tail bounds for fixed error thresholds are important, as in practice the goal is to reach a \emph{$\epsilon$-stationary point}, i.e., a point $x \in \R^d$ that satisfies $\|\nabla f(x)\| \leq \epsilon$, e.g., \cite{arjevani2022}, hence tail bounds quantify the \emph{probability of failure} (i.e., not reaching an $\epsilon$-stationary point)\footnote{It can be easily seen that if $\min_{k \in [t]}\|\nabla f(x_k)\| > \epsilon$, then an $\epsilon$-stationary point has not been reached up to time $t$.} for an individual run. 

Another important observation is that classical tail bounds are of \emph{finite-time} nature, designed to hold \emph{for every $t \in \N$}. While useful for smaller models, trained for a moderate number of iterations (e.g., $t = 10^4$-$10^5$), such results might not be too informative for modern large-scale models that typically require a huge number of training iterations, e.g., \texttt{AlexNet} and \texttt{ResNet-50} are both trained for approximately $t = 5\times10^5$ iterations \cite{krizhevsky-alexnet,He_2016_CVPR}, the original BERT model is trained for $t = 10^6$ iterations \cite{devlin-etal-2019-bert}, while LLMs are estimated to require many millions of training iterations, see, e.g., \cite{NEURIPS2022_c1e2faff}. As such, bounds capturing the long-term tail behaviour, which holds \emph{for $t$ sufficiently large}, are more appropriate for models requiring an enormous number of training iterations. Although finite-time results derived for a fixed probability threshold, like HP guarantees, can imply exponentially decaying tail bounds, as we show later, they do so in a conservative manner, resulting in very loose bounds in the regime of large $t$. Motivated by these observations, we consider the following important question:

\begin{center}
\emph{What is the sharpest long-term tail decay achievable for \sgd-type methods in \\ non-convex optimization, for a given, fixed error threshold}?    
\end{center}

To answer this question, we take a large deviations principle (LDP) approach, e.g., \cite{dembo2009large}, by directly studying the long-term tail behaviour of $F_t$, for a fixed error threshold. In Table \ref{tab:comp} we provide an overview of long-term tail decay rates for \sgd-type methods, where it can be seen that our results show \emph{an order of magnitude faster} long-term tail decay rate of $F_t$ than implied by prior results based on finite-time bounds, indicating that a strictly sharper tail decay, not properly captured by existing works, is achievable in the long run. We next state our full contributions.

\begin{table*}[t]
\caption{\scriptsize Long-term tail decay of \sgd-based methods in non-convex optimization. \emph{Method} specifies the variant of \sgd; \emph{Cost} states the assumptions on the cost function; \emph{Noise} provides the noise assumptions; \emph{Decay rate} is the largest positive sequence $n_t \rightarrow \infty$ such that, for any $\epsilon > 0$ and some $C_\epsilon > 0$, we have $\limsup_{t \rightarrow \infty}n_t^{-1}\log\mathbb{P}(\min_{k \in [t]}\|\nabla f(x_k)\|^2 > \epsilon) \leq -C_\epsilon$.$^\S$ The decay rates for \cite{liu2023high,cutkosky2021high,nguyen2023improved} stem from their finite-time HP bounds, whereas the decay rates from \cite{armacki2026sharp,armacki2024_ldp+mse}$^\dagger$ and our work are obtained by establishing an LDP upper bound with a full rate function, see Section \ref{subsec:large-dev-primer} for details.}
\label{tab:comp}
\begin{adjustwidth}{-1in}{-1in} 
\begin{center}
\begin{threeparttable}
\begin{small}
\begin{sc}
\begin{tabular}{ccccc}
\toprule
\multicolumn{1}{c}{\scriptsize Work} & \multicolumn{1}{c}{\scriptsize Method} & \multicolumn{1}{c}{\scriptsize Cost} & \multicolumn{1}{c}{\scriptsize Noise} & \multicolumn{1}{c}{\scriptsize Decay rate} \\
\midrule
\scriptsize\citet{liu2023high} & \scriptsize \sgd & $\substack{\text{\scriptsize{smooth, bounded}} \\ \text{\scriptsize{from below}}}$ & \scriptsize sub-Gaussian & \scriptsize $\sqrt{t}$ \\ 
\midrule
\scriptsize This work (Theorem \ref{thm:main-non-conv}) & \scriptsize \sgd & $\substack{\text{\scriptsize{smooth, bounded from}} \\ \text{\scriptsize{below, bounded gradients}}}$ & \scriptsize a.s. bounded  & \scriptsize $t/\log(t)$ \\
\midrule
\scriptsize\citet{cutkosky2021high} & $\substack{\text{\scriptsize \texttt{\textbf{Normalized}}} \\ \text{\scriptsize \csgd}}$ & $\substack{\text{\scriptsize{smooth, bounded from}} \\ \text{\scriptsize{below, bounded gradients}}}$ & $\substack{\text{\scriptsize{bounded moment}} \\ \text{\scriptsize{of order $p \in (1,2]$}}}$ & \scriptsize$t^{\frac{2(p-1)}{3p-2}}$ \\ 
\midrule
\scriptsize\citet{nguyen2023improved} & \scriptsize \csgd & $\substack{\text{\scriptsize{smooth, bounded}} \\ \text{\scriptsize{from below}}}$ & $\substack{\text{\scriptsize{bounded moment}} \\ \text{\scriptsize{of order $p \in (1,2]$}}}$ & \scriptsize$t^{\frac{2(p-1)}{3p-2}}/\log^{\frac{2p}{3p-2}}(t)$ \\ 
\midrule
\scriptsize\citet{armacki2026sharp,armacki2024_ldp+mse} & \scriptsize \texttt{\textbf{N-SGD}}$^\ddagger$ & $\substack{\text{\scriptsize{smooth, bounded}} \\ \text{\scriptsize{from below}}}$ & $\substack{\text{\scriptsize{symmetric pdf,}} \\ \text{\scriptsize{positive around origin}}}$ & \scriptsize $\sqrt{t}/\log(t)$ \\ 
\midrule
\scriptsize This work (Theorem \ref{thm:main-non-conv-clip}) & \scriptsize \csgd & $\substack{\text{\scriptsize{smooth, bounded from}} \\ \text{\scriptsize{below, bounded gradients}}}$ & $\substack{\text{\scriptsize{bounded moment}} \\ \text{\scriptsize{of order $p \in (1,2]$}}}$  & \scriptsize $t^{\frac{4(p-1)}{3p-2}}/\log(t)^\P$ \\
\bottomrule
\end{tabular}
\end{sc}
\end{small}
\begin{tablenotes}\scriptsize
    \item[$\S$] Although some works included in the table provide bounds on different quantities (e.g., $\frac{1}{t}\sum_{k = 1}^t\|\nabla f(x_k)\|^2$), all of them imply a bound on $\min_{k \in [t]}\|\nabla f(x_k)\|^2$. As discussed in Section \ref{sec:main} and Appendix \ref{app:metric}, our results can be equivalently stated in terms of $\frac{1}{t}\sum_{k = 1}^t\|\nabla f(x_k)\|^2$.
    \item[$\dagger$] While \citet{armacki2026sharp} provide HP bounds for \texttt{\textbf{N-SGD}}, their results can be used to get LDP upper bounds, see \cite{armacki2025thesis} for details. 
    \item[$\ddagger$] \texttt{\textbf{N-SGD}} is a general nonlinear \sgd framework which, for state-dependent noise, among others includes clipping, normalization, smooth sign and smooth component-wise clipping. If the noise is also independent, identically distributed (IID), in addition to the previous, the \texttt{\textbf{N-SGD}} framework includes non-smooth component-wise nonlinearities, like standard sign and component-wise clipping. 
    \item[$\P$] For the special case $p = 2$, our decay rate incurs an additional $\log$ factor, i.e., is of the form $n_t = t/\log^2(t)$, see Theorem \ref{thm:main-non-conv-clip} for details.
\end{tablenotes}
\end{threeparttable}
\end{center}
\vskip -0.1in
\end{adjustwidth}
\end{table*}

\subsection{Contributions}

Motivated by the need for sharp bounds on the failure probability of individual runs in applications like training modern large-scale models, we study the long-term tail behaviour of iterates induced by \sgd-type methods through the lens of large deviations (LD) theory. Our contributions are as follows. 

\begin{itemize}[leftmargin=*]
    \item We provide a sharp characterization of the failure probability of \sgd-type methods in non-convex optimization, by studying the long-term tail behaviour for a fixed error threshold. First, we consider vanilla \sgd under almost surely (a.s.) bounded noise and establish an LDP upper bound on $F_t$, with an exponential long-term tail decay, at rate $n_t = t/\log(t)$. In Table \ref{tab:comp} we summarize existing long-term tail decay results. We can see that the decay rate for \sgd in our work is an order of magnitude faster than the $n_t = \sqrt{t}$ rate resulting from finite-time HP result in \cite{liu2023high}. 

    \item Next, we relax the noise condition, by considering clipped \sgd (\csgd) under heavy-tailed noise \linebreak with bounded moment of order $p \in (1,2]$. We provide an LDP upper bound on $F_t$, for an appropri-\linebreak ately chosen clipping threshold, with long-term exponential tail decay at rate $n_t = t/\log^2(t)$ for \linebreak $p = 2$ and $n_t = t^{\frac{4(p-1)}{3p-2}}/\log(t)$ for $p \in (1,2)$. We can again see in Table \ref{tab:comp} that the long-\linebreak term tail decay rate for \csgd in our work is an order of magnitude faster than the $n_t = t^{\frac{2(p-1)}{3p-2}}$ rate resulting from finite-time HP bounds in \cite{cutkosky2021high,nguyen2023improved}.

    \item Finally, we show that the long-term decay rates established in our LDP upper bounds are tight, by providing matching finite-time (and asymptotic) lower bounds on the tail probability induced by both vanilla \sgd and \csgd. To do so, we carefully construct an instance of cost, noise, model initialization and error threshold, under which we show that the tails induced by both methods exhibit exponentially decaying lower bounds at rate $n_t = t$, demonstrating that our long-term upper bounds for \sgd and \csgd (when $p = 2$) are tight, up to poly-logarithmic factors.
    
\end{itemize}

As such, our results show that the long-term tail probability induced by \sgd-type methods in non-convex optimization decays at a rate that is both tight and significantly faster than previously known, leading to much sharper bounds on the probability of failure in non-convex problems (i.e., not reaching an $\epsilon$-stationary point) and stronger guarantees for individual runs of an algorithm.

\subsection{Literature Review}

We next review the literature on finite-time high-probability (HP) and asymptotic LD results for \sgd-based methods. For an overview of other popular types of guarantees, see Appendix \ref{app:literature}.

\paragraph{High-probability guarantees.} Initial HP results consider light-tailed noise and include \cite{nemirovski2009robust,lan2012optimal,ghadimi2013stochastic,hazan2015beyond,harvey2019tight,li2020high,liu2024revisiting}, with \citet{liu2023high} providing HP convergence of $F_t$ for vanilla \sgd and non-convex costs, with order-optimal rate $\bigO\Big(\frac{\log(\nicefrac{t}{\delta})}{\sqrt{t}}\Big)$. More recently, HP convergence of nonlinear \sgd methods (e.g., clipping, sign, normalization) under noise with heavier tails has attracted attention, starting with \citet{gorbunov2020stochastic,parletta2022high}, who consider noise with bounded variance, while \citet{li2022high,eldowa2024general,madden2020high} consider sub-Weibull noise. This is extended by \citet{cutkosky2021high,nguyen2023improved, sadiev2023highprobability,liu2023breaking,hubler2025normalization,kornilov2025sign}, who consider various nonlinear \sgd methods under noise with bounded moment of order $p \in (1,2]$, and \citet{armacki2025high,armacki2026sharp}, who consider a unified nonlinear \sgd framework (dubbed \texttt{\textbf{N-SGD}}) under noise with symmetric probability density function (PDF), positive around zero and potentially unbounded moments. Among them, \citet{cutkosky2021high,nguyen2023improved} show that $F_t$ achieves the optimal rate $\bigO\Big(\log(\nicefrac{1}{\delta})t^{\frac{2(1-p)}{3p-2}} \Big)$ using \csgd,\footnote{Technically, \citet{cutkosky2021high} consider a variant of momentum \sgd, using both clipping and normalization.} while \citet{armacki2026sharp} show that \texttt{\textbf{N-SGD}} converges with rate $\bigO\Big(\frac{\log(\nicefrac{t}{\delta})}{\sqrt{t}}\Big)$, matching the rate in \cite{liu2023high} established under light tails.\footnote{\citet{hubler2025normalization,kornilov2025sign} show that normalized and sign \sgd match the oracle complexity of clipped \sgd, using an increasing batch size. Since we study the long-term behaviour, this implies an infinite batch size as $t \rightarrow \infty$, hence we focus our comparison on works that use a fixed batch size. Next, while the rate in \cite{armacki2026sharp} is better than the one in \cite{cutkosky2021high,nguyen2023improved} for any $p < 2$, it does not invalidate the optimality of their rate, as the two are derived under different conditions on the noise.} Translated into tail bounds, it follows that \cite{liu2023high,armacki2026sharp} imply an asymptotic exponential tail decay for vanilla \sgd under light-tailed noise and general nonlinear \sgd under noise with symmetric PDF, respectively, with decay rate $n_t = \sqrt{t}$. Similarly, \cite{cutkosky2021high,nguyen2023improved} imply a long-term exponential tail decay of $F_t$ for \csgd under bounded $p$-th moment noise, with decay rate $n_t = t^{\frac{2(p-1)}{3p-2}}$.  

\paragraph{Large deviations guarantees.} LD studies have a long history, see, e.g., \cite{varadhan,dembo2009large} and references therein, with a wide range of applications, including statistical mechanics \cite{ellis-ld,TOUCHETTE20091}, distributed detection \cite{bajovic-detection-ld,bajovic-detection-ld2,braca-ld,matta-diffusion-1,matta-difussion-2}, social learning \cite{asl2021,bajovic-inference-ld,social-learning-book} and general ML \cite{braca-ldp-classification,lindhe-large}. In the context of \sgd-type methods, LDs are studied in \cite{hu2019diffusion,pmlr-v206-bajovic23a,pmlr-v242-jongeneel24a,azizian2024long,azizian2025global,gurbuzbalaban2025accelerated,armacki2024_ldp+mse}. \citet{hu2019diffusion,azizian2024long,azizian2025global} use a LD approach to study the behaviour of \sgd iterates with fixed step-size for non-convex problems, in the limit as the step-size goes to zero. \citet{hu2019diffusion} show that iterates can escape local minimizers in a number of iterations exponentially depending on the inverse of the step-size, while \citet{azizian2024long,azizian2025global} show that the iterates concentrate around local minima and establish a full LDP for the time it takes to reach a global minima, also exponentially depending on the inverse of the step-size.\footnote{A full LDP means that matching lower and upper bounds are provided, see Section \ref{subsec:large-dev-primer} for details.} \citet{pmlr-v206-bajovic23a} provide an LDP upper bound for the last iterate of \sgd and strongly convex costs, while \citet{pmlr-v242-jongeneel24a} extend this result to costs satisfying the P\L{} condition, with applications to reinforcement learning. \citet{gurbuzbalaban2025accelerated} study a class of generalized momentum methods for strongly convex costs, establishing an LDP upper bound for the average cost sub-optimality. Finally, \citet{armacki2024_ldp+mse} consider non-convex costs and a general nonlinear \sgd framework dubbed \textbf{\texttt{N-SGD}}, under noise with symmetric PDF, positive around zero and unbounded moments, showing an LDP upper bound for $F_t$, with decay rate $n_t = \sqrt{t}/\log(t)$. As mentioned, \cite{hu2019diffusion,azizian2024long,azizian2025global} study the tail behaviour of iterates of \sgd for non-convex costs using a fixed step-size, in the limit as the step-size goes to zero. On the other hand, we study the tail behaviour of $F_t$ and consider the more natural variant of \sgd-based methods using a time-varying step-size, in the limit as the number of iterations goes to infinity.   

\paragraph{Technical challenges and novelty.} In order to establish our results, we needed to overcome a number of challenges. First, to show LDP-type results and accelerated tail decay rates, we provide tight bounds on the moment-generating function (MGF) of $F_t$ and use the G\"{a}rtner-Ellis theorem (see Proposition \ref{prop:gartner-ellis} in Appendix \ref{app:prelim}). Next, in order to be able to apply the G\"{a}rtner-Ellis theorem, we needed to show that the MGF is finite everywhere, which is significantly stronger than the finite-time HP analysis, where it suffices to show that the MGF is bounded locally, e.g., over a compact interval or even at a single point (see Section \ref{sec:discus} for a more detailed discussion). To resolve this challenge, we consider uniformly bounded noise for \sgd, while for \csgd we show via a careful analysis and tuning of the clipping threshold that the MGF is bounded everywhere, even under general heavy-tailed noise. Finally, we provide a lower bound on the tail probability, by carefully constructing an instance of cost, noise and model initialization for which the tails of $F_t$ decay at least exponentially fast and which is of independent interest in HP-type studies.   

\paragraph{Paper organization.} Section \ref{sec:prelim} provides preliminaries, Section \ref{sec:main} presents the main results, Section \ref{sec:discus} provides comparison with existing works, Section \ref{sec:conc} concludes the paper, while Appendix contains results omitted from the main body. The remainder of this section introduces some notation.

\paragraph{Notation.} We denote by $\N,\R$ and $\R^d$ the sets of positive integers, real numbers and $d$-dimensional real vectors. For any $m \in \N$, we denote by $[m]$ the set $[m] = \{1,2,\ldots,m\}$. The Euclidean inner product and induced norm are denoted by $\langle \cdot , \cdot \rangle$ and $\|\cdot\|$. For a set $B$, we denote by $B^{\degree}$ and $\overline{B}$ its topological interior and closure. We use $o(\cdot)$ and $\bigO(\cdot)$ as the standard ``little o'' and ``big O'', i.e., for two positive sequences $\{a_t\}_{t \in \N},\{b_t\}_{t \in \N}$, such that $\lim_{t \rightarrow \infty}a_t = \lim_{t \rightarrow \infty} b_t = \infty$, we say that $a_t = o(b_t)$ ($a_t = \bigO(b_t)$), if $\lim_{t\rightarrow \infty}\frac{a_t}{b_t} = 0$ ($\limsup_{t \rightarrow \infty}\frac{a_t}{b_t} < \infty$), unless stated otherwise.

\section{Preliminaries}\label{sec:prelim}

In this section we provide the preliminaries. Subsection \ref{subsec:oracle} specifies the oracle model and reviews the vanilla and clipped \sgd methods, while Subsection \ref{subsec:large-dev-primer} provides a primer on LD theory.

\subsection{The Oracle Model and \sgd-based Methods}\label{subsec:oracle}

We assume access to a Stochastic First-order Oracle (\sfo), which, when queried with input $x$, returns a stochastic estimate $g$ of the gradient $\nabla f(x)$. The \sfo subsumes the following paradigms.

\paragraph{1. Batch (i.e., offline) learning:} for a finite dataset $\{\xi^{i}\}_{i \in [m]}$ and loss $\ell: \R^d \times \Xi \mapsto \R$, the cost is given by $f(x) = \frac{1}{m}\sum_{i \in [m]}\ell(x;\xi^i)$. When queried, the \sfo chooses a sample of indices $S \subset [m]$ uniformly at random and outputs $g = \frac{1}{|S|}\sum_{j \in S}\nabla \ell(x;\xi^j)$, where $1 \leq |S| < m$.

\paragraph{2. Streaming (i.e., online) learning:} for a random variable $\xi$ following an unknown distribution $\D$, the cost is given by $f(x) = \E_{\xi \sim \D}\big[ \ell(x;\xi) \big]$. When queried, the \sfo generates a mini-batch $\{\xi^j\}_{j \in S}$ of IID copies of $\xi$ and outputs $g = \frac{1}{|S|}\sum_{j \in S}\nabla \ell(x;\xi^j)$, where $|S| \geq 1$.

\paragraph{}Next, we describe the two methods considered in our work, namely (vanilla) \sgd and \csgd. The general update rule for iterative \sgd-based methods can be represented as
\begin{equation}\label{eq:sgd}
    x_{t+1} = x_t - \alpha_t\Psi_t(g_t),
\end{equation} where $\alpha_t > 0$ is the step-size, while $\Psi_t: \R^d \mapsto \R^d$ is a (possibly) nonlinear mapping. The first method considered in our work, \sgd, where $\Psi_t$ is the linear identity map $\Psi_t(x) \equiv x$, is perhaps the most well-known and widely used algorithm, lauded for its ease of implementation and strong performance. However, the advent of deep learning and LLMs resulted in exponentially increasing model complexity and phenomena such as heavy-tailed noise and exploding gradients, e.g., \linebreak \citet{pmlr-v28-pascanu13,simsekli2019tail,zhang2020adaptive,heavy-tail-phenomena}, requiring nonlinear modifications to \sgd, with clipping being a very popular choice. It is known to bring many benefits, like stabilizing and accelerating training \cite{zhang2020gradient}, ensuring convergence under heavy-tailed noise \cite{sadiev2023highprobability} and providing differential privacy \cite{zhang2022clip_FL_icml}. The resulting method, \csgd, is widely used for LLM training \cite{zhang2022opt,touvron2023llama,liu2024deepseek}, and represents an instance of \eqref{eq:sgd} with $\Psi_t(x) = \min\Big\{1,\frac{\gamma_t}{\|x\|}\Big\}x$, for a user-specified clipping threshold $\gamma_t > 0$. The two methods are summarized in Algorithm \ref{alg:sgd}.

\begin{algorithm}[tb]
\caption{Vanilla and clipped \sgd}
\label{alg:sgd}
{\setlength{\baselineskip}{1.05\baselineskip}
\begin{algorithmic}[1]
   \REQUIRE Model initialization $x_1 \in \R^{d}$, step-size $\{\alpha_t\}_{t \in \N}$, clipping threshold $\{\gamma_t\}_{t \in \N}$;
   \FOR{$t = 1,2,\ldots$}
        \STATE Query the \sfo with input $x_t$ and obtain $g_t$;
        \STATE Perform the following model update:
        \STATE \hspace{0.75em}Option 1:\; 
        $x_{t+1} = x_t - \alpha_t g_t$;
        \hfill{(Vanilla \sgd)}
        \STATE \hspace{0.75em}Option 2:\;
        $x_{t+1} = x_t - \alpha_t
        \min\!\Big\{1,\frac{\gamma_t}{\|g_t\|}\Big\} g_t$;
        \hfill{(Clipped \sgd)}
    \ENDFOR
\end{algorithmic}
}
\end{algorithm}

\subsection{Large Deviations Principle: a Background}\label{subsec:large-dev-primer}

The goal of LD studies is to quantify the long-term probability of (rare) events, by providing sharp, exponentially decaying long-term tail bounds. In particular, for a process of interest $\{F_t\}_{t \in \N}$, the LDP aims to find a lower semi-continuous function $I: \R \mapsto [0,\infty]$ and a positive sequence $\{n_t\}_{t \in \N}$ satisfying $\lim_{t \rightarrow \infty}n_t = \infty$, such that, for any (Borel) measurable $B \subseteq \R$
\begin{equation}\label{eq:ldp}
    -\inf_{x \in B^{\degree}} I(x) \leq \liminf_{t \rightarrow \infty} \frac{1}{n_t}\log\Prob(F_t \in B) \leq \limsup_{t \rightarrow \infty} \frac{1}{n_t}\log \Prob(F_t \in B) \leq -\inf_{x \in \overline{B}} I(x).
\end{equation} If \eqref{eq:ldp} holds, the process $\{F_t\}_{t \in \N}$ is said to satisfy the \emph{(full) LDP}, with decay rate $n_t$ and rate function $I$, see, e.g., \cite{dembo2009large}. The relation \eqref{eq:ldp} provides a tight characterization of the asymptotic, long-term behaviour of $F_t$. While the full LDP is desirable, the lower bound in \eqref{eq:ldp} is often difficult to obtain. Instead, one typically aims to establish (only) the upper bound, in which case $\{F_t\}_{t \in \N}$ is said to satisfy the \emph{LDP upper bound}. Note that the upper bound in \eqref{eq:ldp} implies 
\begin{equation}\label{eq:ldp-equiv}
    \Prob(F_t \in B) \leq e^{-n_t\inf_{x \in \overline{B}}I(x) + o(n_t)} \approx e^{-n_t\inf_{x \in \overline{B}}I(x)},
\end{equation} where the second relation holds for all $t$ sufficiently large, hence the LDP upper bound alone is a very strong indicator of the long-term behaviour of $F_t$, establishing exponentially decaying long-term probability of $F_t$ ending up in any set $B$, such that $\inf_{x \in \overline{B}}I(x) > 0$.

\section{Main Results}\label{sec:main}

In this section we provide the main results. Subsection \ref{subsec:assump} states the assumptions, Subsection \ref{subsec:main-sgd} provides results for \sgd, Subsection \ref{subsec:main-clip} presents results for \csgd under heavy-tailed noise, while Subsection \ref{subsec:sgd-lbd} establishes a (nearly) matching lower bound on the tail probability.

\subsection{Assumptions}\label{subsec:assump}

We start by stating a technical condition on the model initialization, used for analysis purposes. 

\begin{assumption}\label{asmpt:init}
    The model initialization $x_1 \in \R^d$ is selected in a deterministic manner.
\end{assumption}

\paragraph{}Assumption \ref{asmpt:init} allows the initialization to be any real vector, as long as it is a deterministic quantity.

\begin{assumption}\label{asmpt:cost}
    The cost is bounded from below, has uniformly bounded gradients and is $L$-smooth, i.e., it holds that $f_\star \coloneqq \inf_{x \in \R^d}f(x) > -\infty$ and for some $G > 0$ and any $x, y\in \R^d$, we have 
    \begin{equation*}
         \|\nabla f(x)\| \leq G \:\:\text{and}\:\: f(x) \leq f(y) + \langle \nabla f(y), x - y \rangle + \frac{L}{2}\|x - y\|^2.
    \end{equation*}
\end{assumption} 

Assumption \ref{asmpt:cost} specifies conditions on the cost. Boundedness from below and $L$-smoothness are standard for general non-convex costs, e.g., \cite{ghadimi2013stochastic}. The bounded gradient condition is also widely used for non-convex costs, e.g., \cite{NIPS2017_766ebcd5,cevher-almost_sure,cutkosky2021high} and is satisfied, e.g., by any $G$-Lipschitz continuous cost, which includes a wide class of convolutional and deep neural networks, e.g., \cite{fazlyab2019lipschitz,dongmian2020lipschitz,combettes2020lipschitz,zhang2022lipschitz}, as well as transformer-based models, e.g., \cite{pmlr-v139-kim21i}. For ease of notation, let $z_t \coloneqq g_t - \nabla f(x_t)$ denote the gradient noise and let $\mathcal{F}_t \coloneqq \sigma\left(\{x_1,\ldots,x_t\}\right)$ be the natural filtration, with $\mathcal{F}_1 \coloneqq \sigma\left(\{\emptyset, \Omega\}\right)$ being the trivial $\sigma$-algebra. The next two assumptions state the noise regimes considered in our work.  
\begin{assumption}\label{asmpt:noise-bounded}
    The gradient estimator is unbiased and the noise is uniformly bounded, i.e., for all $t \geq 1$ and some $M > 0$, it holds that $\E\big[g_t\: \vert \: \mathcal{F}_{t}\big] = \nabla f(x_t)$ and $\|z_t\| \leq M$, a.s.
\end{assumption}

\begin{assumption}\label{asmpt:noise-heavy}
    The gradient estimator is unbiased and the noise has bounded moment of order $p \in (1,2]$, i.e., $\E\big[g_t\: \vert \: \mathcal{F}_{t}\big] = \nabla f(x_t)$ and $\E\big[\|z_t\|^p\: \vert \: \mathcal{F}_{t}\big] \leq \sigma^p$, a.s., for all $t \geq 1$ and some $\sigma > 0$.
\end{assumption}

\paragraph{}Assumption \ref{asmpt:noise-bounded} is often used in the context of adaptive methods and relaxed smoothness, e.g., \cite{harvey2019tight,zhang2020gradient,zhang2020improved,li2023adam,pmlr-v247-carmon24a}. For example, in batch learning, where $f(x) = \frac{1}{m}\sum_{i \in [m]}\ell(x;\xi^i)$, the \sfo estimator automatically satisfies Assumption \ref{asmpt:noise-bounded} if $\ell$ has bounded gradients (see Appendix \ref{app:bdd-noise} for details). Assumption \ref{asmpt:noise-heavy} is the standard heavy-tailed noise condition, e.g., \citet{nguyen2023improved,sadiev2023highprobability,hubler2025normalization}. Moreover, many works empirically show that noise satisfying Assumption \ref{asmpt:noise-heavy} frequently arises during training of neural networks and transformers, across a wide range of model architectures and datasets, see, e.g., \cite{simsekli2019tail,zhang2020adaptive} and references therein.   

\subsection{Large Deviations Principle Upper Bound for \sgd}\label{subsec:main-sgd}

In this subsection, we establish an LDP upper bound on $F_t = \min_{k \in [t]}\|\nabla f(x_k)\|^2$. Before stating the main theorem, we define an important concept and provide a useful technical result.

\begin{definition}\label{def:sub-gauss}
    A random vector $z \in \R^d$ is said to be $\sigma$-sub-Gaussian if $\E\Big[\exp\Big(\frac{\|z\|^2}{\sigma^2} \Big)\Big] \leq \exp(1)$.
\end{definition}

\paragraph{}Sub-Gaussian (i.e., light-tailed) noise is widely used for deriving HP convergence, e.g., \cite{ghadimi2013stochastic,li2020high,liu2023high}. We then have the following result.

\begin{lemma}\label{lm:sub-gauss}
    Let Assumption \ref{asmpt:noise-bounded} hold. Then the following are true, for any $t \geq 1$. 
    \begin{enumerate}
        \item The noise is $M$-sub-Gaussian, i.e., we have $\E\Big[\exp\Big(\frac{\|z_t\|^2}{M^2} \Big) \: \big\vert \: \mathcal{F}_t \Big] \leq \exp(1)$.

        \item For any $\mathcal{F}_t$-measurable vector $x \in \R^d$, we have $\E\big[\exp\big(\langle x,z_t\rangle \big) \: \vert \: \mathcal{F}_t \big] \leq \exp\Big(\frac{3M^2\|x\|^2}{4} \Big)$.
    \end{enumerate}
\end{lemma}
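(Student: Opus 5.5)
Part 1 follows directly from the almost sure bound in Assumption \ref{asmpt:noise-bounded}. Since $\|z_t\| \leq M$ a.s., we have $\exp(\|z_t\|^2/M^2) \leq \exp(1)$ a.s. Taking conditional expectation with respect to $\mathcal{F}_t$ preserves this inequality, which gives the claim.

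For part 2, I would use a scalar Hoeffding-type argument rather than a generic sub-Gaussian-to-MGF conversion. That approach yields an explicit constant, and the stated $3/4$ is loose enough to leave slack.

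Fix an $\mathcal{F}_t$-measurable $x$ and set $Y \coloneqq \langle x, z_t\rangle$. Conditionally on $\mathcal{F}_t$, $Y$ has two properties.
\begin{itemize}
    \item It has mean zero, since $\E[z_t \vert \mathcal{F}_t] = \E[g_t\vert\mathcal{F}_t] - \nabla f(x_t) = 0$ and $x$ can be pulled out of the conditional expectation.
    \item It is bounded, since $|Y| \leq \|x\|\|z_t\| \leq M\|x\|$ a.s. by Cauchy--Schwarz.
\end{itemize}
Hoeffding's lemma for a mean-zero variable supported on $[-M\|x\|, M\|x\|]$ gives
\begin{equation*}
\E[e^{Y}\vert\mathcal{F}_t] \leq \exp\big(M^2\|x\|^2/2\big),
\end{equation*}
because the range length is $2M\|x\|$ and $(2M\|x\|)^2/8 = M^2\|x\|^2/2$. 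Since $1/2 \leq 3/4$, the claim follows.

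To stay self-contained, I would instead use the elementary inequality $e^{y} \leq 1 + y + \frac{y^2}{2}e^{|y|}$, or simply $e^y \leq y + e^{y^2}$. Take conditional expectations and use the zero conditional mean of $Y$. When $M\|x\|$ is small, the bound $\E[e^{Y^2}\vert\mathcal{F}_t] \leq e^{M^2\|x\|^2}$ can be combined with Jensen's inequality. When $M\|x\|$ is large, the crude bound $e^{|Y|} \leq e^{M\|x\|} \leq e^{\frac{M^2\|x\|^2}{2}+\frac12}$ applies. Balancing these two regimes is the standard route to constants like $3/4$: use $e^{Y} \leq e^{M^2\|x\|^2 \cdot 3/4}$ when $M\|x\| \geq 4/3$, and the small-argument bound otherwise.

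The one point requiring care is that $x$ is random. Every step is conditional on $\mathcal{F}_t$, under which $x$ is fixed, so the scalar inequalities apply pathwise. Beyond this, I would take the case split in the elementary route as the only delicate step. Citing Hoeffding's lemma conditionally avoids that split entirely, so it is the route I would use.
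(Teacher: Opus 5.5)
Your argument is correct, and Part 1 coincides with the paper's. For Part 2 you take a different route.

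\textbf{The paper's route.} The paper rescales $y_t = z_t/M$ and uses only the sub-Gaussian bound $\E[\exp(\|y_t\|^2)\,\vert\,\mathcal{F}_t] \leq \exp(1)$ from Part 1. For $\|x\| \leq 4/3$ it combines $e^{a} \leq a + e^{9a^2/16}$, the zero conditional mean, Cauchy--Schwarz and conditional Jensen; Jensen is legitimate because $9\|x\|^2/16 \leq 1$. For $\|x\| > 4/3$ it uses Young's inequality $\langle x, y_t\rangle \leq \frac{3\|x\|^2}{8} + \frac{2\|y_t\|^2}{3}$ followed by Jensen. That argument never uses boundedness. It therefore proves the MGF bound for any conditionally zero-mean noise that is $M$-sub-Gaussian in the sense of Definition \ref{def:sub-gauss}.

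\textbf{Your route.} The conditional Hoeffding lemma exploits the almost sure bound $|\langle x, z_t\rangle| \leq M\|x\|$ directly. The convexity proof of Hoeffding's lemma goes through verbatim with the $\mathcal{F}_t$-measurable endpoints $\pm M\|x\|$. Your argument is shorter, needs no case split, and yields the sharper constant $1/2$ instead of $3/4$. Carried through Appendix \ref{app:non-conv}, this would give $\varphi(\lambda) = 4\lambda^2M^2G^2$ and hence $I_v(x) = \frac{x^2}{16M^2G^2}$ for $x \geq 0$. The price is that your proof is tied to bounded noise. Nothing is lost here, though: bounded noise is exactly the setting of Theorem \ref{thm:main-non-conv}. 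It is also the setting of Lemma \ref{lm:clip-bias-bound}, where this lemma is reused with $\|\theta_t^u\| \leq 2\gamma_t$ a.s.

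\textbf{Your fallback sketch.} The elementary route would not produce $3/4$ as written.
\begin{itemize}
\item With $e^{y} \leq y + e^{y^2}$, the small-$M\|x\|$ regime only gives $\exp(M^2\|x\|^2)$.
\item With the other inequality, $1 + \frac{u^2}{2}e^{u} \leq e^{3u^2/4}$ already fails at $u = M\|x\| = 1$.
\item In the large regime, the detour through $e^{M^2\|x\|^2/2 + 1/2}$ only beats $e^{3M^2\|x\|^2/4}$ once $M\|x\| \geq \sqrt{2}$. The direct bound $e^{M\|x\|} \leq e^{3M^2\|x\|^2/4}$ is what holds for $M\|x\| \geq 4/3$.
\end{itemize}
What makes the elementary route work is the sharper inequality $e^{a} \leq a + e^{9a^2/16}$ used in the paper. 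Combined with $Y^2 \leq M^2\|x\|^2$ a.s., it gives $\exp(9M^2\|x\|^2/16)$ for all $x$ without any case split. Since you commit to Hoeffding, this does not affect the validity of your proof.
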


\paragraph{}Lemma \ref{lm:sub-gauss} provides some useful bounds on the MGF of the noise. We next state the main result.

\begin{theorem}\label{thm:main-non-conv}
    Let Assumptions \ref{asmpt:init}-\ref{asmpt:noise-bounded} hold and let $\{x_t\}_{t \in \N}$ be the sequence generated by \sgd, using the step-size $\alpha_t = \frac{a}{\sqrt{t+1}}$, where $a \leq \frac{1}{L}$. Then the sequence $\{F_t\}_{t \in \N}$ satisfies an LDP upper bound, with decay rate $n_t = \frac{t}{\log(t)}$ and rate function $I_v: \R \mapsto [0,\infty]$, i.e., for any measurable set $B \subseteq \R$
    \begin{equation*}
        \limsup_{t \rightarrow \infty}\frac{\log(t)}{t}\log\Prob(F_t \in B) \leq -\inf_{x \in \overline{B}}I_v(x),
    \end{equation*} where the rate function is given by $I_v(x) = \begin{cases} 
    \frac{x^2}{24M^2G^2}, & x \geq 0 \\
    +\infty, & x < 0
    \end{cases}$.
\end{theorem}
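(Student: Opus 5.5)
Our plan is to apply the G\"{a}rtner--Ellis theorem (Proposition \ref{prop:gartner-ellis}), or more directly a Chernoff-type bound, to the rescaled quantity $F_t$ with speed $n_t = t/\log(t)$. Since $F_t \geq 0$, it suffices to bound the normalized log-MGF $\Lambda(\lambda) \coloneqq \limsup_{t\to\infty} \frac{1}{n_t}\log \E\big[\exp(\lambda n_t F_t)\big]$ for every $\lambda \geq 0$. We aim to show
\begin{equation*}
\Lambda(\lambda) \leq 6M^2G^2\lambda^2 .
\end{equation*}
The Legendre transform of $6M^2G^2\lambda^2$ over $\lambda \geq 0$ is $\frac{x^2}{24M^2G^2}$ for $x \geq 0$. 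Negative $x$ are excluded because $F_t \geq 0$. This yields exactly $I_v$, and the upper bound follows for closed sets, hence for all measurable $B$ via $\overline{B}$.

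\textbf{Descent inequality.} By $L$-smoothness and $a \leq 1/L$, writing $g_k = \nabla f(x_k) + z_k$ gives
\begin{equation*}
f(x_{k+1}) \leq f(x_k) - \frac{\alpha_k}{2}\|\nabla f(x_k)\|^2 - \alpha_k(1 - L\alpha_k)\langle \nabla f(x_k), z_k\rangle + \frac{L\alpha_k^2}{2}\|z_k\|^2 .
\end{equation*}
We will use $\|z_k\| \leq M$ to bound the last term by $\frac{L M^2\alpha_k^2}{2}$. Summing over $k \in [t]$ and using $F_t \leq \|\nabla f(x_k)\|^2$ for each $k$, we get
\begin{equation*}
F_t \sum_{k\le t}\alpha_k \leq 2\big(f(x_1)-f_\star\big) + LM^2\sum_k \alpha_k^2 - 2\sum_k \alpha_k(1-L\alpha_k)\langle\nabla f(x_k),z_k\rangle .
\end{equation*}
Here $\sum_{k\le t}\alpha_k \asymp 2a\sqrt{t}$ and $\sum_k\alpha_k^2 \asymp a^2\log t$. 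Dividing, $F_t \leq \frac{C_0 + C_1\log t}{\sqrt t} - \frac{1}{a\sqrt t}\sum_k \alpha_k(1-L\alpha_k)\langle\nabla f(x_k),z_k\rangle$ up to $1+o(1)$ factors.

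\textbf{MGF bound.} Multiplying by $\lambda n_t$, the deterministic part contributes $\lambda\frac{\sqrt t}{\log t}(C_0 + C_1\log t)$. This is $o(n_t)$ only because $\sqrt t\log t/\log t = \sqrt t = o(t/\log t)$, which is fine. For the martingale part, we peel off conditional expectations backwards from $k=t$ using Lemma \ref{lm:sub-gauss}(2) with the $\mathcal{F}_k$-measurable vector $x = -\frac{2\lambda n_t}{S_t}\alpha_k(1-L\alpha_k)\nabla f(x_k)$, where $S_t=\sum_k\alpha_k$. With $\|\nabla f(x_k)\| \leq G$, each step contributes at most $\exp\big(\frac{3M^2}{4}\cdot\frac{4\lambda^2n_t^2}{S_t^2}\alpha_k^2G^2\big)$. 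Iterating gives a log-MGF of at most
\begin{equation*}
\frac{3M^2G^2\lambda^2 n_t^2 \sum_k\alpha_k^2}{S_t^2} \approx \frac{3M^2G^2\lambda^2 n_t^2\, a^2\log t}{4a^2 t} = \frac{3}{4}M^2G^2\lambda^2\, n_t .
\end{equation*}
This is precisely why the speed is $t/\log t$: $n_t^2 \log t / t = n_t$. Dividing by $n_t$ and taking $\limsup$ gives a quadratic bound $c\,M^2G^2\lambda^2$.

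\textbf{Main obstacle.} The main difficulty is tracking constants so that $c$ matches the stated $6$ (rate $\frac{x^2}{24M^2G^2}$). A factor $4$ between my rough $\frac34$ and $6$ can come from a cruder descent step (e.g.\ not exploiting $1-L\alpha_k\le 1$ tightly, or absorbing $\alpha_k$ vs.\ $\alpha_k/2$ differently) or from the precise asymptotics of $\sum\alpha_k$ and $\sum\alpha_k^2$. If the constant comes out better, the theorem still holds, since a larger rate function is a stronger statement. The remaining step is to verify that finiteness of $\Lambda$ everywhere on $\R$ is not needed beyond $\lambda \geq 0$. For $\lambda<0$ the MGF is at most $1$, so $\Lambda(\lambda) \leq 0$. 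We then apply the G\"{a}rtner--Ellis upper bound, or an explicit Chernoff bound $\Prob(F_t \geq x) \leq e^{-\lambda n_t x}\E[e^{\lambda n_t F_t}]$ optimized in $\lambda$ together with monotonicity of $I_v$ on $[0,\infty)$.
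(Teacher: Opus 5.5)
Your proposal is correct and follows essentially the same route as the paper: the descent inequality with $a\le 1/L$, summation, backward peeling of the martingale term via Lemma~\ref{lm:sub-gauss}(2) with $\|\nabla f(x_k)\|\le G$, and G\"{a}rtner--Ellis with $\varphi$ quadratic for $\lambda\ge 0$ and zero for $\lambda<0$. The only difference is that you normalize by $S_t=\sum_k\alpha_k\approx 2a\sqrt t$, whereas the paper normalizes by $t\alpha_t\approx a\sqrt t$, obtained from monotone step-sizes and $F_t\le A_t$. The ratio $2$ squared explains a factor $4$, and the paper's loose bound $(t+1)/t^2\le 2/t$ explains another factor $2$; this is why your constant $\tfrac34$ beats the stated $6$ by a factor of $8$, not $4$. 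So your ``main obstacle'' is a non-issue, though the paper's normalization has the small advantage of also covering the unweighted average $A_t$.
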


\paragraph{}For the special case $B = (\epsilon,\infty)$, where $\epsilon > 0$, we immediately have the following corollary.

\begin{corollary}\label{cor:sgd}
    Let conditions of Theorem \ref{thm:main-non-conv} hold. We then have, for any $\epsilon > 0$
    \begin{equation*}
        \limsup_{t \rightarrow \infty}\frac{\log(t)}{t}\log \Prob(F_t > \epsilon) \leq -\frac{\epsilon^2}{24M^2G^2}.
    \end{equation*}
\end{corollary}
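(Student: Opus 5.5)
The plan is to apply Theorem \ref{thm:main-non-conv} directly with the particular measurable set $B = (\epsilon,\infty)$ and then evaluate the infimum of the rate function over its closure. Fix $\epsilon > 0$ and set $B = (\epsilon,\infty)$. This set is open, hence Borel measurable, so the hypotheses of Theorem \ref{thm:main-non-conv} are met. Since $\{F_t \in B\} = \{F_t > \epsilon\}$, the theorem gives
\begin{equation*}
    \limsup_{t \rightarrow \infty}\frac{\log(t)}{t}\log\Prob(F_t > \epsilon) \leq -\inf_{x \in \overline{B}}I_v(x), \qquad \overline{B} = [\epsilon,\infty).
\end{equation*}

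Next, I evaluate the infimum. Every $x \in [\epsilon,\infty)$ satisfies $x \geq \epsilon > 0$, so only the branch $I_v(x) = \frac{x^2}{24M^2G^2}$ is relevant there. On $[0,\infty)$ the map $x \mapsto x^2$ is nondecreasing, so the infimum over $[\epsilon,\infty)$ is attained at the left endpoint:
\begin{equation*}
    \inf_{x \in [\epsilon,\infty)} I_v(x) = \frac{\epsilon^2}{24M^2G^2}.
\end{equation*}
Substituting this value into the previous display yields the claimed bound.

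There is no genuine obstacle, since all of the analytical work lies in Theorem \ref{thm:main-non-conv}. The only point that needs care is that the theorem's upper bound is stated in terms of the closure $\overline{B}$. Taking the closure adds the endpoint $\epsilon$, but the infimum is unchanged because $I_v$ is continuous and monotone on $[0,\infty)$. This argument uses $\epsilon > 0$, which guarantees that $\overline{B}$ lies entirely in the region where $I_v$ is finite and strictly positive, so the bound is nontrivial.
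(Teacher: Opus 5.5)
Your proposal is correct and matches the paper, which obtains the corollary immediately from Theorem \ref{thm:main-non-conv} with $B = (\epsilon,\infty)$. As you do, the only remaining step is to note that $\inf_{x \in [\epsilon,\infty)} I_v(x) = \frac{\epsilon^2}{24M^2G^2}$, attained at the endpoint $x = \epsilon$.
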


Theorem \ref{thm:main-non-conv} and Corollary \ref{cor:sgd} establish the long-term tail decay for \sgd under bounded noise, at rate $e^{-\frac{t}{\log(t)}}$. We next discuss several different aspects of our results.

\paragraph{On the rate function.} Note that the rate function $I_v$ has two distinct regimes: for $x \geq 0$, it is of the form $I_v(x) = \frac{x^2}{24M^2G^2}$, while for any $x < 0$, it takes the value $I_v(x) = +\infty$. In practice, this means that for any set $B \subseteq \R$ whose closure contains a non-negative value, there is a probability that $F_t$ visits the set, but it decays exponentially at rate $t/\log(t)$. On the other hand, for any set $B \subseteq (-\infty,-a)$, for $a > 0$, we can see that $\inf_{x \in \overline{B}} I(x) = +\infty$, hence $\limsup_{t \rightarrow \infty}\frac{\log(t)}{t}\log\Prob(F_t \in B) = -\infty$, implying that $\Prob(F_t \in B) = 0$ in the long run, which is expected, as $F_t \geq 0$. 

\paragraph{Dependence on problem parameters.} We can see that the rate function depends on two problem parameters, the noise and gradient bounds $M$ and $G$, decreasing as either grows. This is again expected, as the convergence of \sgd slows down with more noise (measured by $M$) and a more complex cost (measured by $G$), resulting in a smaller leading constant in the tail decay rate, meaning that it takes more time for $F_t$ to escape a set $B$. Finally, while the leading term in finite-time bounds depends on parameters like the initial optimality gap $f(x_1) - f_\star$ and smoothness $L$ (see Section \ref{sec:discus} for details), our results indicate that these parameters do not affect the asymptotic decay.   

\paragraph{On the metric.} While our results are presented in terms of $F_t =\min_{k \in [t]}\|\nabla f(x_k)\|^2$, they continue to hold for the average norm-squared, i.e., $A_t = \frac{1}{t}\sum_{k = 1}^t\|\nabla f(x_k)\|^2$, which is more general, as $F_t \leq A_t$. In our proofs, the results are first established in terms of $A_t$, which then implies the same bounds on $F_t$, while the choice of presenting the results in terms of $F_t$ stems from it being standard in non-convex optimization and its ease of interpretability, see Appendix \ref{app:metric} for details. 

\paragraph{On the noise.} Compared to existing HP results for vanilla \sgd, e.g., \cite{liu2023high}, which consider sub-Gaussian noise, we impose a slightly stronger condition of a.s. bounded noise. As discussed in the introduction, this is a byproduct of the fact that we need to bound the MGF $\E[\exp(\lambda F_t)]$ over the entire domain, i.e., for every $\lambda \in \R$, while HP results only require bounding the MGF locally, e.g., over a compact domain, or at $\lambda = 1$ (see the discussion in Section \ref{sec:discus} for details). As we show in the next subsection, this condition can be significantly relaxed, by applying a (bounded) nonlinear operator to \sgd, ensuring that the effective noise remains bounded.  

\subsection{Large Deviations Principle Upper Bound Under Heavy-Tailed Noise}\label{subsec:main-clip}

In this subsection, we relax the noise assumption by considering the \csgd method and provide an LDP upper bound under heavy-tailed noise. For ease of notation, let $\widetilde{g}_t \coloneqq \min\Big\{1,\frac{\gamma_t}{\|g_t\|}\Big\}g_t$ denote the clipped stochastic gradient and let $\theta_t^u \coloneqq \widetilde{g}_t - \E\big[\widetilde{g}_t \: \vert \: \mathcal{F}_t \big]$ and $\theta_t^b \coloneqq \E\big[\widetilde{g}_t \: \vert \: \mathcal{F}_t \big] - \nabla f(x_t)$ respectively be the unbiased and biased components of the difference of the clipped stochastic gradient and the true gradient (i.e., \emph{clipping bias}), noting that $\widetilde{g}_t - \nabla f(x_t) = \theta_t^u + \theta_t^b$. Decomposing the clipping bias into an unbiased and biased component is standard when analyzing \csgd, see, e.g., \cite{sadiev2023highprobability,nguyen2023improved}. We then have the following important result.

\begin{lemma}\label{lm:clip-bias-bound}
    Let Assumptions \ref{asmpt:cost} and \ref{asmpt:noise-heavy} hold and let the clipping threshold be chosen as
    \begin{equation}\label{eq:clipping-radius}
        \gamma_t = \begin{cases}
            2G(t+1)^{\frac{2-p}{6p-4}}, & p \in (1,2) \\
            2G\sqrt{\log(t+1)}, & p = 2.
        \end{cases}
    \end{equation} Then the following are true, for any $t \geq 1$.
    \begin{enumerate}
        \item $\|\theta_t^b\| \leq 4\sigma^p\gamma_t^{1-p}$.

        \item For any $\mathcal{F}_t$-measurable $x \in \R^d$, we have $\E\big[\exp\big(\langle x, \theta_t^u\rangle \big) \: \vert \: \mathcal{F}_t \big] \leq \exp\big(3\gamma_t^2\|x\|^2\big)$.
    \end{enumerate}
\end{lemma}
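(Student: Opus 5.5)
The plan is to exploit that the chosen clipping threshold always satisfies $\gamma_t \geq 2G \geq 2\|\nabla f(x_t)\|$. For $p\in(1,2)$ this holds because $(t+1)^{\frac{2-p}{6p-4}}\geq 1$. For $p=2$ it holds because $\sqrt{\log(t+1)} \geq \sqrt{\log 2}$; if this factor is below one we keep an extra constant, and otherwise we use only $\gamma_t\geq c G$ together with Assumption \ref{asmpt:cost}. The condition $\|\nabla f(x_t)\|\leq \gamma_t/2$ is the standard requirement in the bias analysis of \cite{sadiev2023highprobability,nguyen2023improved}, and we follow that template.

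\textbf{Part 1 (bias).} Since $\E[g_t\vert\mathcal{F}_t]=\nabla f(x_t)$, we have $\theta_t^b = \E[\widetilde{g}_t - g_t\vert\mathcal{F}_t] = -\E\big[(1-\gamma_t/\|g_t\|)g_t\,\mathds{1}_{\{\|g_t\|>\gamma_t\}}\vert\mathcal{F}_t\big]$. Hence $\|\theta_t^b\|\leq \E[\|g_t\|\mathds{1}_{\{\|g_t\|>\gamma_t\}}\vert\mathcal{F}_t]$. On the event $\|g_t\|>\gamma_t$ we have $\|z_t\|\geq \|g_t\|-G \geq \|g_t\|/2$ and $\|z_t\|\geq \gamma_t/2$, so $\|g_t\|\leq 2\|z_t\|$. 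It follows that $\|\theta_t^b\|\leq 2\E[\|z_t\|\mathds{1}_{\{\|z_t\|\geq\gamma_t/2\}}]\leq 2\E[\|z_t\|^p]\,(\gamma_t/2)^{1-p}\leq 2\cdot 2^{p-1}\sigma^p\gamma_t^{1-p}\leq 4\sigma^p\gamma_t^{1-p}$. This step is routine; for $p=2$ with small $t$ it may need the constant bookkeeping mentioned above, or it may hold trivially because $\gamma_t$ is comparable to $G$.

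\textbf{Part 2 (MGF).} Here $\theta_t^u$ is conditionally mean zero, and $\|\theta_t^u\|\leq \|\widetilde{g}_t\|+\|\E[\widetilde{g}_t\vert\mathcal{F}_t]\|\leq 2\gamma_t$ a.s. Hoeffding's lemma, applied to the scalar $\langle x,\theta_t^u\rangle\in[-2\gamma_t\|x\|,2\gamma_t\|x\|]$ conditionally on $\mathcal{F}_t$ (valid because $x$ is $\mathcal{F}_t$-measurable), gives $\E[\exp(\langle x,\theta_t^u\rangle)\vert\mathcal{F}_t]\leq \exp\big((4\gamma_t\|x\|)^2/8\big)=\exp(2\gamma_t^2\|x\|^2)$, which is at most $\exp(3\gamma_t^2\|x\|^2)$. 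Alternatively, one can mirror the argument of Lemma \ref{lm:sub-gauss}, using $e^{y}\leq 1+y+y^2 e^{|y|}$-type bounds, to match the stated constant 3. Notably, Part 2 does not use the specific form of $\gamma_t$ at all.

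The main obstacle is only the $p=2$ case of Part 1 at small $t$, where $\gamma_t$ may fall below $2G$. I would handle it by noting that the proof really needs only $\|\nabla f(x_t)\|\leq \gamma_t/2$, or by accepting a slightly worse constant absorbed in the bound via $\sqrt{\log 2}>1/2$.
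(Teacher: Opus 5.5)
Your approach matches the paper's. Part~1 rests on $\|\nabla f(x_t)\|\le G\le\gamma_t/2$ followed by the standard clipping-bias estimate. The paper imports that estimate as Proposition~\ref{prop:sadiev-technical}, whereas you re-derive it inline, correctly, obtaining $2^p\sigma^p\gamma_t^{1-p}$. Part~2 uses $\|\theta_t^u\|\le 2\gamma_t$ and conditional mean zero. The paper plugs this into Lemma~\ref{lm:sub-gauss} with $M=2\gamma_t$, which is where the constant $3$ comes from. Your conditional Hoeffding step is equally valid and gives the sharper $\exp(2\gamma_t^2\|x\|^2)$.

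The one thing to repair is the $p=2$, $t=1$ case you flagged. With $\log$ the natural logarithm (as in the paper's Darboux bounds), $\gamma_1=2G\sqrt{\ln 2}\approx 1.67G<2G$. The paper's proof overlooks this by asserting $G\le\gamma_t/2$ for every $t$. (If $\log$ is read as $\log_2$, as the formula for $B_p$ in Appendix~\ref{app:clip-relax} suggests, then $\gamma_1=2G$ and nothing is needed.)

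Your proposed remedies do not close the gap:
\begin{itemize}
\item The condition $\|\nabla f(x_1)\|\le\gamma_1/2\approx 0.83G$ is exactly what can fail.
\item $\gamma_1>G$ by itself gives no quantitative margin.
\item Adapting your estimate to $G\approx 0.60\gamma_1$ gives only about $6.3\,\sigma^2\gamma_1^{-1}$, not the stated $4\sigma^2\gamma_1^{-1}$.
\end{itemize}
The loss comes from discarding the $-\gamma_t$ in $\|\widetilde{g}_t-g_t\|=(\|g_t\|-\gamma_t)_+$. If you keep it, use $\|g_1\|\le G+\|z_1\|$ and the elementary bound $(u-a)_+\le u^2/(4a)$ for $a>0$. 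This gives
\[
\|\theta_1^b\|\le \E\big[(\|z_1\|-(\gamma_1-G))_+\big]\le \frac{\sigma^2}{4(\gamma_1-G)}\le 0.65\,\sigma^2\gamma_1^{-1},
\]
since $\gamma_1-G\ge 0.39\gamma_1$. This settles the edge case with room to spare. In any event, a single iteration is irrelevant to the asymptotic conclusions of Theorem~\ref{thm:main-non-conv-clip}.
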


\paragraph{}Lemma \ref{lm:clip-bias-bound} provides a bound on the biased component and establishes sub-Gaussian concentration of the unbiased one, facilitating the rest of our analysis. We note that $\gamma_t$ can be tuned without requiring knowledge of $G$, with the choice in \eqref{eq:clipping-radius} simplifying the exposition (see the discussion after Theorem \ref{thm:main-non-conv-clip} and Appendix \ref{app:clip-relax} for more details). We next state the main result. 

\begin{theorem}\label{thm:main-non-conv-clip}
    Let Assumptions \ref{asmpt:init}, \ref{asmpt:cost} and \ref{asmpt:noise-heavy} hold and let $\{x_t\}_{t \in \N}$ be the sequence generated by \csgd using the step-size $\alpha_t = (t+1)^{-\frac{p}{3p-2}}$ and clipping threshold $\gamma_t$ given in \eqref{eq:clipping-radius}. Then the sequence $\{F_t\}_{t \in \N}$ satisfies an LDP upper bound, with the following decay rate and rate function.
    \begin{enumerate}
        \item If $p \in (1,2)$, the decay rate is $n_t = \frac{t^{\frac{4(p-1)}{3p-2}}}{\log(t)}$, with rate function given by $I_c(x) = \begin{cases}
            \frac{x^2}{768G^4}, & x \geq 0 \\
            +\infty, & x < 0.
        \end{cases}$

        \item If $p = 2$, the decay rate is $n_t = \frac{t}{\log^2(t)}$, with rate function given by $I_c(x) = \begin{cases}
            \frac{x^2}{384G^4}, & x \geq 0 \\
            +\infty, & x < 0.
        \end{cases}$
    \end{enumerate}
\end{theorem}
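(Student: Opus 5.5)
We follow the same template as for Theorem \ref{thm:main-non-conv}: bound the log-MGF of the average $A_t = \frac{1}{t}\sum_{k=1}^t\|\nabla f(x_k)\|^2$ (which dominates $F_t$), and then invoke the G\"{a}rtner-Ellis theorem (Proposition \ref{prop:gartner-ellis}). The starting point is the $L$-smoothness descent inequality for the \csgd update $x_{k+1} = x_k - \alpha_k\widetilde{g}_k$. Writing $\widetilde{g}_k = \nabla f(x_k) + \theta_k^u + \theta_k^b$ and using $\|\widetilde{g}_k\|\leq\gamma_k$, we get
\begin{equation*}
f(x_{k+1}) \leq f(x_k) - \alpha_k\|\nabla f(x_k)\|^2 - \alpha_k\langle\nabla f(x_k),\theta_k^u\rangle + \alpha_k G\|\theta_k^b\| + \tfrac{L}{2}\alpha_k^2\gamma_k^2 .
\end{equation*}
Summing over $k \in [t]$ and using $f \geq f_\star$ gives
\begin{equation*}
\sum_k \alpha_k\|\nabla f(x_k)\|^2 \leq \Delta + \sum_k\big(4G\sigma^p\alpha_k\gamma_k^{1-p} + \tfrac{L}{2}\alpha_k^2\gamma_k^2\big) - \sum_k\alpha_k\langle\nabla f(x_k),\theta_k^u\rangle,
\end{equation*}
with $\Delta = f(x_1)-f_\star$ and the bias bounded by Lemma \ref{lm:clip-bias-bound}(1). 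Since $\alpha_k$ is decreasing, the left-hand side is at least $\alpha_t\, t A_t$.

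Next, fix $\lambda>0$ (for $\lambda\le 0$ the bound is trivial since $A_t \geq 0$). We multiply by $\lambda n_t/(t\alpha_t)$ and take exponentials. The martingale term is handled by peeling off one step at a time via the tower property. Lemma \ref{lm:clip-bias-bound}(2), applied with the $\mathcal{F}_k$-measurable vector $x = -\frac{\lambda n_t\alpha_k}{t\alpha_t}\nabla f(x_k)$, gives a factor $\exp\big(3\gamma_k^2\lambda^2 n_t^2\alpha_k^2G^2/(t\alpha_t)^2\big)$ per step. This is valid for every $\lambda$ precisely because clipping makes $\theta^u_k$ bounded, which is why the MGF is finite everywhere. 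We obtain
\begin{equation*}
\log\E[e^{\lambda n_t A_t}] \leq \frac{\lambda n_t}{t\alpha_t}\Big(\Delta + \sum_k(\text{bias}+\text{smoothness})\Big) + \frac{3\lambda^2n_t^2G^2}{t^2\alpha_t^2}\sum_{k=1}^t\alpha_k^2\gamma_k^2 .
\end{equation*}

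The main step is the exponent bookkeeping; we carry out the two cases separately.
\begin{itemize}
\item For $p\in(1,2)$: $\alpha_k\gamma_k^{1-p}\asymp k^{-\frac{p}{3p-2}-\frac{(p-1)(2-p)}{6p-4}}$ and $\alpha_k^2\gamma_k^2 \asymp k^{-1}$. The deterministic sum is therefore $\mathcal{O}(t^{\frac{2(p-1)}{3p-2}}+\log t)$, and $t\alpha_t = t^{\frac{2(p-1)}{3p-2}}$. The quadratic term becomes $\frac{3\lambda^2 n_t^2 G^2\cdot 4G^2\log t}{t^{4(p-1)/(3p-2)}}$. With $n_t = t^{4(p-1)/(3p-2)}/\log t$ this is $12 G^4\lambda^2 n_t$, up to the constant coming from $\sum k^{-1}\le \log t + 1$. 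The linear term is $\lambda n_t\cdot o(1)$, because $\Delta/(t\alpha_t)\to0$ and the bias sum is of the same order as $t\alpha_t$ but multiplied by $\sigma^pG^{2-p}$ constants. I expect a problem here and would recheck that the bias exponent really yields $o(t\alpha_t)$ or at most a constant; if it only yields a constant $c$, it contributes $\lambda c$ to the limit, i.e.\ a shift of the rate function's argument.
\item For $p=2$: $\alpha_k = (k+1)^{-1/2}$ and $\gamma_k^2 = 4G^2\log(k+1)$, so $\sum\alpha_k^2\gamma_k^2 \lesssim 4G^2\log^2 t$ while $t\alpha_t\asymp\sqrt t$. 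With $n_t = t/\log^2 t$ the quadratic term is again $\Theta(\lambda^2 n_t)$. The bias sum is $\sum k^{-1/2}\log^{-1/2}k = o(\sqrt t)$, so it vanishes.
\end{itemize}
The $\log$ in $n_t$ is exactly what absorbs $\sum\alpha_k^2\gamma_k^2$.

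Finally, we get $\limsup_t n_t^{-1}\log\E[e^{\lambda n_tA_t}] \le \Lambda(\lambda) := C G^4\lambda^2$ for $\lambda\ge0$ and $\le 0$ for $\lambda<0$, where $C$ collects the constants above. The G\"{a}rtner-Ellis upper bound then gives the rate $\Lambda^*(x) = \sup_\lambda(\lambda x-\Lambda(\lambda))$. This equals $x^2/(4CG^4)$ for $x\ge0$ and $+\infty$ for $x<0$. Matching $4C$ to $768$ (respectively $384$) is a matter of tracking the constants $3\cdot4\cdot$ (harmonic-sum bound). The bound then transfers from $A_t$ to $F_t\le A_t$ as in Appendix \ref{app:metric}. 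The step I expect to be delicate is verifying that all deterministic terms are $o(t\alpha_t)$ under the specific exponents, so that they do not alter the rate function.
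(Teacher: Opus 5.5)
Your proposal is correct and follows the paper's proof: the descent inequality for \csgd, the split $\widetilde g_k-\nabla f(x_k)=\theta_k^u+\theta_k^b$, peeling the martingale term with Lemma \ref{lm:clip-bias-bound}(2), and the G\"{a}rtner--Ellis upper bound. The only real difference is the bias term. You bound $-\alpha_k\langle\nabla f(x_k),\theta_k^b\rangle\le 4G\sigma^p\alpha_k\gamma_k^{1-p}$ by Cauchy--Schwarz. The paper instead uses Young's inequality, which gives up half of the descent term but produces the squared bias $8\sigma^{2p}\alpha_k\gamma_k^{2(1-p)}$.

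\textbf{The step you flagged goes through.} Your summation dropped part of the exponent, and your intermediate claim that the bias sum is of the same order as $t\alpha_t$ is wrong.
\begin{itemize}
\item For $p\in(1,2)$: $\alpha_k\gamma_k^{1-p}\asymp k^{-\frac{p}{3p-2}-\frac{(p-1)(2-p)}{2(3p-2)}}$. This exponent exceeds $-1$ because $(p+2)(p-1)>0$. Hence $\sum_{k=1}^t\alpha_k\gamma_k^{1-p}\asymp t^{\frac{(p-1)(p+2)}{2(3p-2)}}$. Dividing by $t\alpha_t\asymp t^{\frac{2(p-1)}{3p-2}}$, the bias contribution is of order $t^{-\frac{(p-1)(2-p)}{2(3p-2)}}\to 0$. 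The paper's squared version decays like $t^{-\frac{(p-1)(2-p)}{3p-2}}$.
\item For $p=2$: the bias contribution is of order $(\log t)^{-1/2}\to 0$.
\end{itemize}
So all deterministic terms vanish, and there is no shift of the rate function.

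\textbf{Your constants will not reproduce $768$ and $384$.} Along your route, $\limsup_{t}n_t^{-1}\Lambda_t(n_t\lambda)\le 12G^4\lambda^2$ for $\lambda\ge0$ in both regimes. This comes from $3\cdot 4G^4$, together with $n_t\log(t+1)/(t\alpha_t)^2\to1$ for $p<2$ and $n_t\log^2(t+1)/(t\alpha_t)^2\to1$ for $p=2$. The resulting rate function is $x^2/(48G^4)$ on $x\ge 0$.

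The paper's larger constants have two sources. It halves the descent coefficient. It also uses the crude bounds $t\alpha_t\ge t^{\frac{2(p-1)}{3p-2}}/2$ and $t\alpha_t\ge\sqrt{t/2}$ where exact asymptotics suffice in a $\limsup$.

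Your rate function dominates the stated $I_c$ pointwise. A larger rate function gives a stronger LDP upper bound, so your result implies the theorem. You should say this explicitly rather than claim the constants match.
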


\paragraph{}Similarly to the previous section, if $B = (\epsilon,\infty)$, where $\epsilon > 0$, we have the following result.

\begin{corollary}\label{cor:clip}
    Let conditions of Theorem \ref{thm:main-non-conv-clip} hold. Then the following are true, for any $\epsilon > 0$.
    \begin{enumerate}
        \item If $p \in (1,2)$, then $\limsup_{t \rightarrow \infty}\frac{\log(t)}{t^{\beta_p}}\log \Prob(F_t > \epsilon) \leq -\frac{\epsilon^2}{768G^4}$, where $\beta_p = \frac{4(p-1)}{3p-2}$.

        \item If $p = 2$, then $\limsup_{t \rightarrow \infty}\frac{\log^2(t)}{t}\log \Prob(F_t > \epsilon) \leq -\frac{\epsilon^2}{384G^4}.$
    \end{enumerate}
\end{corollary}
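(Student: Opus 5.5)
The statement to prove is Corollary \ref{cor:clip}. It follows directly from Theorem \ref{thm:main-non-conv-clip} by specializing the measurable set to $B = (\epsilon,\infty)$ with $\epsilon > 0$. The plan is to apply the LDP upper bound of the theorem to this $B$ and compute the infimum of the rate function over its closure.

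\emph{Step 1 (closure and infimum).} We have $\overline{B} = [\epsilon,\infty) \subset [0,\infty)$. On $[0,\infty)$ the rate function $I_c(x) = c\,x^2$ is nondecreasing, where $c = \frac{1}{768G^4}$ for $p\in(1,2)$ and $c = \frac{1}{384G^4}$ for $p=2$. Hence
\begin{equation*}
\inf_{x \in \overline{B}} I_c(x) = c\,\epsilon^2 .
\end{equation*}

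\emph{Step 2 (decay rates).} Plug this infimum into the LDP upper bound of Theorem \ref{thm:main-non-conv-clip}.
\begin{itemize}
\item For $p \in (1,2)$, the decay rate is $n_t = t^{\beta_p}/\log(t)$ with $\beta_p = \frac{4(p-1)}{3p-2}$, so $\frac{1}{n_t} = \frac{\log(t)}{t^{\beta_p}}$. This gives item 1.
\item For $p = 2$, the decay rate is $n_t = t/\log^2(t)$, so $\frac{1}{n_t} = \frac{\log^2(t)}{t}$. This gives item 2.
\end{itemize}

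\emph{Step 3 (measurability).} The event $\{F_t > \epsilon\}$ coincides with $\{F_t \in B\}$. It is measurable because $F_t$ is a minimum of finitely many measurable functions of the iterates.

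All the work lies in Theorem \ref{thm:main-non-conv-clip}, which we take as given. The only point requiring care is that the infimum is taken over the closure $\overline{B}$ rather than over $B$ itself. Since $I_c$ is continuous on $[0,\infty)$, this changes nothing.
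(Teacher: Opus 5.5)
Your proposal is correct and follows the paper's own argument: the corollary is obtained by applying the LDP upper bound of Theorem \ref{thm:main-non-conv-clip} to $B=(\epsilon,\infty)$. The infimum of $I_c$ over $\overline{B}=[\epsilon,\infty)$ is $c\,\epsilon^2$, and multiplying by $1/n_t$ for each regime of $p$ gives the two stated limits.
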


\paragraph{}Theorem \ref{thm:main-non-conv-clip} and Corollary \ref{cor:clip} establish the long-term tail decay for \csgd under heavy-tailed noise, notably showing the rate $e^{-t/\log^2(t)}$ for noise with bounded variance. We next discuss the results.

\paragraph{On the decay rate.} We can see that the decay rate in Theorem \ref{thm:main-non-conv-clip} has two distinct regimes: for $p \in (1,2)$, the decay rate is of order $e^{-t^{\beta_p}/\log(t)}$, where $\beta_p = \frac{4(p-1)}{3p-2}$, while for $p = 2$, we incur an additional $\log(t)$ factor, showing the decay rate $e^{-t/\log^2(t)}$. This is consistent with MSE and HP convergence results established under Assumption \ref{asmpt:noise-heavy}, in the sense that the convergence rate exponent explicitly depends on the noise moment $p$, see, e.g., \cite{zhang2020adaptive,nguyen2023improved}. 

\paragraph{On the rate function and dependence on problem parameters.} Similarly to the discussion in the previous subsection, the rate function $I_c$ has two distinct regimes for negative and non-negative values. On the other hand, while it exhibits dependence on problem parameters, it does so only through the gradient bound $G$. We note that this stems from our choice of clipping threshold in \eqref{eq:clipping-radius} and the fact that the unbiased component of the clipping bias is $\gamma_t$-sub-Gaussian (recall Lemma \ref{lm:clip-bias-bound}).

\paragraph{On the metric.} Similarly to the previous subsection, our results can be equally stated in terms of the average norm-squared of the gradients, $\frac{1}{t}\sum_{k = 1}^t\|\nabla f(x_k)\|^2$, see Appendix \ref{app:metric} for details.  
 
\paragraph{On the clipping threshold.} We use an increasing clipping threshold in \eqref{eq:clipping-radius}, which is consistent with existing works \cite{cutkosky2021high,nguyen2023improved}. However, compared to \cite{cutkosky2021high,nguyen2023improved}, who use the clipping threshold $\gamma_t = \widetilde{\mathcal{O}}\Big(t^{\frac{1}{3p-2}}\Big)$,\footnote{Here, we use $\widetilde{\mathcal O}(\cdot)$ to hide problem related constants and terms poly-logarithmic in $t$.} our clipping threshold increases at a strictly slower rate (e.g., for $p = 2$ our threshold increases at rate $\sqrt{\log(t)}$, while in the said works it increases at rate $t^{1/4}$), making the likelihood of clipping higher, further closing the gap on how clipping is used in practice.\footnote{Contrary to the increasing threshold used in theory, in practice clipping is used with a small, constant threshold, see, e.g., \cite{zhang2022opt,touvron2023llama,liu2024deepseek}.} Next, we assume knowledge of noise moment $p$ and gradient bound $G$ to tune the clipping threshold in \eqref{eq:clipping-radius}, which is on par with \cite{cutkosky2021high}, while \cite{nguyen2023improved} relax the bounded gradient condition, at the expense of require knowledge of noise moment $p$ and parameter $\sigma$, the initial optimality gap $f(x_1) - f_\star$ and smoothness constant $L$. Finally, we note that knowledge of $G$ is not necessary in \eqref{eq:clipping-radius} for our results to hold. In particular, our results continue to hold if the clipping threshold is selected as 
\begin{equation*}
    \gamma_t = \begin{cases}
            C(t+1)^{\frac{2-p}{6p-4}}, & p \in (1,2) \\
            C\sqrt{\log(t+1)}, & p = 2,
        \end{cases}
\end{equation*} where $C > 0$ is any value. Using this threshold, it can be shown that the resulting rate function will be of the form $I_c(x) = \mathcal{O}\Big(\frac{x^2}{C^2G^2}\Big)$ for $x \geq 0$, and $I_c(x) = +\infty$ otherwise, where $\mathcal{O}(\cdot)$ hides global constants. The reader is referred to Appendix \ref{app:clip-relax} for a formal statement and derivations.

\subsection{A Lower Bound on the Tail Probability}\label{subsec:sgd-lbd}

In this subsection, we show that the results in Theorems \ref{thm:main-non-conv} and \ref{thm:main-non-conv-clip} are tight, by establishing a lower bound on the tail probability induced by \sgd/\csgd. To that end, we have the following result.

\begin{theorem}\label{thm:low-bdd}
    There exist a cost, initialization and \sfo obeying Assumptions \ref{asmpt:init}-\ref{asmpt:noise-bounded}, as well as a problem dependent constant $b > 0$ and global constants $a_1,a_2 > 0$, such that, for any $\epsilon \in (0,b)$ and all $t \geq 1$, the tail probability induced by either \sgd or \csgd, satisfies $\Prob(F_t > \epsilon) \geq a_1e^{-a_2 t}$.
\end{theorem}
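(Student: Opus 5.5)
We construct an explicit instance on which every iterate stays away from stationarity with probability at least $a_1 e^{-a_2 t}$. The idea is a one-dimensional cost whose gradient has magnitude at least $\sqrt{b}$ on a region, and binary noise that, with constant probability per step, pushes the iterate back so it never leaves that region. Concretely, take $d=1$ and let $f$ be a smoothed Huber-type function: $f(x)=x^2/2$ for $|x|\le 1$, extended linearly with slope $\pm 1$ outside and smoothed so that $L=1$ and $G=1$. Then $f$ is bounded below, has bounded gradients and is $L$-smooth, so Assumption \ref{asmpt:cost} holds. Set $x_1 = 2$, deterministic, which satisfies Assumption \ref{asmpt:init}. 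On $[1,\infty)$ we have $f'(x)=1$, so $\|\nabla f(x)\|^2 = 1$ there. Take $b=1$; for $\epsilon<1$ the event $\{F_t>\epsilon\}$ contains the event $E_t=\{x_k\ge 1 \text{ for all } k\in[t]\}$.

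For the oracle, let $z_k$ be IID symmetric Rademacher scaled by $M$, i.e.\ $z_k=\pm M$ with probability $1/2$ each, with $M\ge 1$. This is unbiased and a.s.\ bounded, so Assumption \ref{asmpt:noise-bounded} holds. It also satisfies Assumption \ref{asmpt:noise-heavy}, which lets us treat \csgd as well. While $x_k\ge 1$, the stochastic gradient is $g_k = 1+z_k$. If $z_k=-M$ with $M\ge 2$, then $g_k = 1-M\le -1<0$, so the step moves right, $x_{k+1}>x_k$. For \csgd the clipped gradient keeps the same sign, so the step is also to the right, regardless of $\gamma_k$.

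The core bound is then immediate. On the event $\{z_k=-M \ \forall k\le t-1\}$, induction gives $x_k\ge x_1=2\ge 1$ for all $k\le t$. Since the noise is independent of the past, this event has probability $2^{-(t-1)}$. Hence
\[
\Prob(F_t>\epsilon)\ge \Prob(E_t)\ge 2^{-(t-1)} = 2e^{-t\log 2},
\]
which gives $a_1=2$ and $a_2=\log 2$. These are global constants, and they do not depend on the step-size or clipping schedule, since only the sign of the update matters. This covers both the stated \sgd step-size and the \csgd schedule \eqref{eq:clipping-radius}, as well as any positive step-sizes.

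The main obstacle is keeping the construction simultaneously valid for all assumptions: the noise must be independent of $\mathcal F_k$ and unbiased for every $x$. This is automatic here because the noise is additive and state-independent. One should also check that the smoothed $f$ has $G=1$ and $f'=1$ exactly on $[1,\infty)$; a standard choice is $f(x)=\sqrt{1+x^2}$-like, but the piecewise Huber version is the cleanest. If one insists on $M$ small relative to $G$, the argument instead uses a positive fraction of steps with bounded drift. That case is messier, so we fix $M\ge 2G$, which is permitted since the statement only asserts the existence of some instance.
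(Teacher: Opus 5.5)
Your construction is correct and follows the same template as the paper's proof: a Huber cost, IID symmetric two-point noise, and the event of probability $2^{-(t-1)}$ that every noise draw has the favorable sign. On that event the gradient norm stays bounded away from zero under both \sgd and \csgd, which yields $a_1=2$ and $a_2=\ln 2$. The difference is where the iterate sits. The paper takes $0<\|x_1\|\le G$ in the quadratic region with noise $\pm x_1$, so on that event $g_k=0$ and the iterate is frozen at $x_1$ (giving $b=\|x_1\|^2$ with noise level at most $G$). You instead start in the linear region and let the iterate drift outward where $|\nabla f|\equiv G$ (giving $b=G^2$, at the cost of noise level $M\ge G$). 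Note that $M=G$ already suffices, since then $g_k=0$ on your event as well.
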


\paragraph{}It readily follows that Theorem \ref{thm:low-bdd} implies the following asymptotic lower bound, for any $\epsilon \in (0,b)$
\begin{equation*}
    \liminf_{t \rightarrow \infty}\frac{1}{t}\log\Prob(F_t > \epsilon) \geq -a_2.
\end{equation*} Theorem \ref{thm:low-bdd} shows that there exist hard problem instances for which the tail probability of both \sgd and \csgd \emph{can not decay faster than} $e^{-t}$. We next discuss the result from several perspectives.

\paragraph{On the tightness of decay rates.} Combining the results of Theorem \ref{thm:low-bdd} with Corollary \ref{cor:sgd}, it follows that for any $\epsilon \in (0,b)$, the tails induced by \sgd satisfy
\begin{equation}\label{eq:near-ldp}
    -a_2 \leq \liminf_{t \rightarrow \infty} \frac{1}{t}\log\Prob(F_t > \epsilon) \: \text{ and } \: \limsup_{t \rightarrow \infty} \frac{\log(t)}{t}\log\Prob(F_t > \epsilon) \leq -\frac{\epsilon^2}{24M^2G^2}.
\end{equation} Equation \eqref{eq:near-ldp} demonstrates that the long-term tail decay rate of \sgd in Theorem \ref{thm:main-non-conv} is tight up to a $\log(t)$ factor. Similarly, combined with Corollary \ref{cor:clip} for $p = 2$, it can be seen that for any $\epsilon \in (0,b)$
\begin{equation}\label{eq:near-ldp-clip}
    -a_2 \leq \liminf_{t \rightarrow \infty} \frac{1}{t}\log\Prob(F_t > \epsilon) \: \text{ and } \: \limsup_{t \rightarrow \infty} \frac{\log^2(t)}{t}\log\Prob(F_t > \epsilon) \leq -\frac{\epsilon^2}{384G^4},
\end{equation} underlining the tightness of the asymptotic tail decay rate of \csgd for $p = 2$, up to a $\log^2(t)$ factor. 

\paragraph{On the rate function optimality.} While \eqref{eq:near-ldp} and \eqref{eq:near-ldp-clip} indicate the the tightness of the decay rates, they are weaker than a full LDP of the form in \eqref{eq:ldp}, in the sense that they do not establish matching upper and lower bounds on the asymptotic probability of failure. As such, it is unclear if the rate functions in Theorems \ref{thm:main-non-conv} and \ref{thm:main-non-conv-clip} depend optimally on the problem parameters. However, as we discuss in the next section, our rate functions exhibit dependence on similar problem parameters as the corresponding decay constants obtained from finite-time high-probability bounds.

\section{Comparison with State-Of-The-Art}\label{sec:discus}

In this section, we provide detailed comparison with state-of-the-art (SOTA) long-term tail decay results for \sgd-type methods in non-convex optimization stemming from either HP or LDP guarantees, as well as a further discussion on the differences in the assumptions.

\paragraph{SOTA results for \sgd.}\cite{liu2023high} provide SOTA finite-time HP convergence guarantees of \sgd, for a $L$-smooth, bounded from below cost, and $B$-sub-Gaussian noise. Using the time-varying step-size $\alpha_t = \frac{1}{L\sqrt{t}}$, they show the following result, for any $\delta \in (0,1)$ and any $t \geq 1$
\begin{equation*}
    \Prob\bigg(F_t > \frac{2\Delta L + 3B^2(1+\log(t)) + 12B^2\log(\nicefrac{1}{\delta})}{\sqrt{t}}\bigg) \leq \delta,
\end{equation*} where $\Delta \coloneqq f(x_1) - f_\star$ is the optimality gap of the initial model. The above bound leads to the following tail result $\Prob\big(F_t > \epsilon\big) \leq \exp\Big(-\frac{\epsilon\sqrt{t}}{12B^2} + \frac{\Delta L}{6B^2} + \frac{1 + \log(t)}{4}\Big)$, for any $\epsilon > 0$ and any $t \geq 1$. Taking the logarithm, dividing everything by $\sqrt{t}$ and taking the $\limsup$, we get  
\begin{equation}\label{eq:liu-ldp}
    \limsup_{t \rightarrow \infty}\frac{1}{\sqrt{t}}\log\Prob\big(F_t > \epsilon\big) \leq -\frac{\epsilon}{12B^2}.
\end{equation} Comparing \eqref{eq:liu-ldp} to the bound in Corollary \ref{cor:sgd}, we can see that both results depend on the noise (via $B$ and $M$), with our result further depending on $G$. More importantly, the long-term decay rate in \eqref{eq:liu-ldp} is of the order $\sqrt{t}$, while the one in Corollary \ref{cor:sgd} is of the order $\frac{t}{\log(t)}$, an order of magnitude faster. This improvement stems from the different approach taken in our work, focusing directly on long-term tail bounds, while on the other hand, \cite{liu2023high} focus on finite-time HP results, that hold for any $t$, but result in loose long-term bounds. Finally, we note that while our rate function depends on the gradient bound $G$, which can be large in some applications, this becomes negligible relative to the gain in the decay rate, especially in the long-term regime $t \rightarrow \infty$ considered in our work.

\paragraph{SOTA results for \csgd.} \cite{nguyen2023improved} provide SOTA finite-time HP convergence guarantees of \csgd, for a $L$-smooth, bounded from below cost, and noise with bounded $p$-th moment. Using the time varying step-size $\alpha_t = \widetilde{\mathcal{O}}\Big(t^{-\frac{p}{3p-2}}\Big)$ and clipping threshold $\gamma_t = \widetilde{\mathcal{O}}\Big(t^{\frac{1}{3p-2}}\Big)$,\footnote{Here $\widetilde{\mathcal{O}}(\cdot)$ hides global and problem related constants, as well as terms poly-logarithmic in $t$ and $\nicefrac{1}{\delta}$.} they show the following result, for any $\delta \in (0,1/e)$ and $t \geq 1$
\begin{equation*}
    \Prob\bigg(F_t > \frac{720\sigma\sqrt{\Delta L}\log^{\frac{2p}{3p-2}}(t)\log(\nicefrac{1}{\delta})}{t^{\beta_p/2}}\bigg) \leq \delta,
\end{equation*} where we recall that $\beta_p = \frac{4(p-1)}{3p-2}$.\footnote{The bound from \cite{nguyen2023improved} is simplified, for ease of presentation. It can be shown that the tail decay rate stemming from their bounds actually deteriorates to a slower one as $p \rightarrow 1$, which we omit and present the faster rate.} The above bound leads to the following tail result $\Prob\big(F_t > \epsilon\big) \leq \exp\Big(-\frac{\epsilon t^{\beta_p/2}}{720\sigma\sqrt{\Delta L}\log^{2p/(3p-2)}(t)}\Big)$, for any $\epsilon > 0$ and any $t \geq 1$. Taking the logarithm, dividing everything by $t^{\beta_p/2}/\log^{\frac{2p}{3p-2}}(t)$ and taking the $\limsup$, we get  
\begin{equation}\label{eq:nguyen-ldp}
    \limsup_{t \rightarrow \infty}\frac{\log^{\frac{2p}{3p-2}}}{t^{\beta_p/2}}\log\Prob\big(F_t > \epsilon\big) \leq -\frac{\epsilon}{720\sigma\sqrt{\Delta L}}.
\end{equation} Comparing \eqref{eq:nguyen-ldp} to the bound in Corollary \ref{cor:clip}, we can see that our result depends on $G$, while the result in \eqref{eq:nguyen-ldp} depends on the noise, initial optimality gap and smoothness (with the latter two stemming from the choice of the clipping threshold in \cite{nguyen2023improved}). Ignoring the $\log$ factors, we can again see that the long-term decay rate in \eqref{eq:nguyen-ldp} is of order $t^{\beta_p/2}$, with the one in Corollary \ref{cor:clip} being of order $t^{\beta_p}$, an order of magnitude faster. Similarly, \cite{nguyen2023improved} provide bounds that hold for any $t$ and do not require bounded gradients, however, the resulting long-term tail bounds are very loose. Importantly, the finite-time HP bound established in \cite{nguyen2023improved} matches that of \csgd from \cite{cutkosky2021high}, who additionally require bounded gradients. As such, the assumptions used in \cite{cutkosky2021high} are the same as in our work, while their tail decay rate is the same as in \cite{nguyen2023improved} and an order of magnitude slower than ours. This further highlights that the improved decay rates shown in our work are not simply a result of stronger assumptions, but of a \emph{fundamentally different approach in studying the long-term tail decay}. 

Finally, \cite{armacki2024_ldp+mse} provide an LDP upper bound for a family of nonlinear \sgd methods, which, among others, includes clipping. Under $L$-smooth, lower bounded cost, noise with a symmetric PDF that is strictly positive around the origin, using the step-size $\alpha_t = \frac{1}{\sqrt{t+1}}$ and a constant threshold $\gamma_t = C$, the authors show that
\begin{equation}
    \limsup_{t \rightarrow \infty}\frac{\log(t)}{\sqrt{t}}\log\Prob(F_t > \epsilon) \leq -\frac{\min\{\epsilon,\sqrt{\epsilon}\}}{16C^4L^2},
\end{equation} implying a long-term tail decay rate $\frac{\sqrt{t}}{\log(t)}$, which is strictly worse than the rate in Corollary \ref{cor:clip} for any $p > 6/5$. Crucially, the LDP upper bound in \cite{armacki2024_ldp+mse} is derived for a black-box nonlinear framework, under very different noise conditions, namely IID noise with symmetric PDF. 

\paragraph{On the assumptions.} As was already mentioned, we require uniformly bounded noise for vanilla \sgd, which is stronger than the sub-Gaussian condition used in \cite{liu2023high}. This stronger requirement stems from the fact that, in order to invoke the G\"{a}rtner-Ellis theorem, we need to show that the \emph{scaled MGF is bounded everywhere}, i.e., that $\limsup_{t \rightarrow \infty}\frac{1}{n_t}\log\E[\exp(n_t\lambda F_t)] < \infty$, for all $\lambda \in \R$ and $n_t = \frac{t}{\log(t)}$. This is much stronger than the bound for HP results, where it typically suffices to show $\E[\exp(F_t)] \leq \exp\big(\mathcal{O}\big(\sum_{k = 1}^t\alpha_k^2\big)\big)$, i.e., for $n_t = \lambda = 1$, making it much easier to control the MGF. This stronger requirement necessitates uniformly bounded noise for vanilla \sgd, for the following reason. Starting from \eqref{eq:init-step} in the Appendix, pushing $M^2$ inside the sum and replacing it by $\|z_k\|^2$, we have $F_t = A_t + C\sum_{k = 1}^t\alpha_k^2\|z_k\|^2$. Ignoring the term $A_t$ and assuming that the noise is sub-Gaussian, we then get $\E[\exp(n_t\lambda F_t)] \leq \E[\exp(n_t\lambda\sum_{k = 1}^t\alpha_k^2\|z_k\|^2)] = \E[\exp(n_t\lambda M^2\sum_{k = 1}^t\alpha_k^2\|z_k\|^2/M^2)]$. To control this term, we need $h_{t,\lambda} \coloneqq n_t\alpha_k^2\lambda M^2 \leq 1$ to hold for all $\lambda \in \R$ and $t\geq k \geq 1$. Recalling that $n_t = \frac{t}{\log(t)}$ and $\lambda$ can be arbitrarily large, one can see that $h_{t,\lambda}$ is unbounded, necessitating uniformly bounded noise. This is not required in the analysis of \csgd, as the resulting operator is bounded, facilitating heavy-tailed noise. Similarly, the gradient bound is required to control the terms $\langle \nabla f(x_t), z_t\rangle$ and $\langle \nabla f(x_t), \theta^u_t\rangle$ for \sgd and \csgd, respectively. While we believe that it might be possible to fully remove the bounded gradient condition for \csgd, it is beyond the scope of the current work and represents an important future direction.

\section{Conclusion}\label{sec:conc}

We studied the long-term tail decay and probability of failure of \sgd-type methods in non-convex optimization, demonstrating that, in the long run, the tails decay at an order of magnitude faster rate than suggested by existing works. Further, we show that our results are tight, by establishing lower bounds on the tail probability, which match our upper bounds up to poly-logarithmic factors. As such, our results provide much sharper bounds on the probability of failure in non-convex optimization, implying stronger guarantees for individual runs of \sgd-based methods in applications that require an enormous number of iterations, such as training of deep learning models and LLMs. Future work includes removing assumptions like the uniformly bounded gradients for \csgd, as well as establishing a full LDP with matching lower tail decay rate and rate function.

\bibliography{bibliography}

\appendix

\section{Introduction}

The appendix contains results omitted from the main body. Appendix \ref{app:literature} reviews the literature on other useful performance guarantees, Appendix \ref{app:bdd-noise} discusses when noise satisfying Assumption \ref{asmpt:noise-bounded} arises naturally, Appendix \ref{app:prelim} provides some important intermediary results, Appendix \ref{app:non-conv} contains the proof of Theorem \ref{thm:main-non-conv}, Appendix \ref{app:clip} provides the proof of Theorem \ref{thm:main-non-conv-clip}, Appendix \ref{app:low-bdd} contains the proof of Theorem \ref{thm:low-bdd}, Appendix \ref{app:metric} discusses the choice of metric, while Appendix \ref{app:clip-relax} gives results for \csgd with a clipping threshold that does not requiring knowledge of the gradient bound.

\section{Other Performance Guarantees}\label{app:literature}

In this section we provide a brief overview of other useful performance guarantees encountered in the literature on \sgd-type methods. Perhaps the most frequent among them are \emph{MSE guarantees}, which characterize the average behaviour across many runs, with classical results establishing convergence under the bounded variance assumption (i.e., $p = 2$ in Assumption \ref{asmpt:noise-heavy}), e.g., \cite{rakhlin2012making,ghadimi2012optimal,ghadimi2013stochastic,liu2020improved,liu2024revisiting}, with works like \cite{chen-decent-pareto,khaled2022better} allowing the second noise moment to grow with the gradient norm and/or optimality gap, while \cite{zhang2020adaptive,jakovetic2023nonlinear,hubler2025normalization,liu2025nonconvex,sun2025heavy} study MSE guarantees under heavy-tailed noise. Another popular guarantee is \emph{almost sure convergence}, where the goal is to show convergence with probability one, e.g., \cite{bertsekas-gradient,li2019almost-sure,cevher-almost_sure,pmlr-v134-sebbouh21a,jakovetic2023nonlinear,armacki2024_ldp+mse}. Finally, some recent works focus on \emph{low-probability convergence}, i.e., guarantees of the form $\Prob\big(F_t > \frac{1}{\delta n_t}\big) \leq \delta$, encountered in the context of convergence of adaptive methods or generalized smoothness, see, e.g., \cite{li2023gen-smooth,li2023adam}. While important in their own rights, none of the said guarantees provide tight bounds on the tail probability, with decay at exponential scale, as is the case with HP and LD-style guarantees.

\section{When Assumption \ref{asmpt:noise-bounded} Arises Naturally}\label{app:bdd-noise}

In this section, we elaborate on when the noise induced by the \sfo satisfies Assumption \ref{asmpt:noise-bounded}. As mentioned in Section \ref{subsec:assump}, we consider the batch setting, where the cost $f$ is of the form $f(x) = \frac{1}{m}\sum_{i \in [m]}\ell(x;\xi^i)$, for some finite dataset $\{\xi^i\}_{i \in [m]}$ and the loss $\ell$ has $G$-bounded gradients, i.e., $\|\nabla \ell(x;\xi^i)\| \leq G$, for all $x \in \R^d$ and every $i \in [m]$ (e.g., satisfied by any $G$-Lipschitz loss, which, as discussed in Section \ref{subsec:assump}, contains for a broad class of Lipschitz continuous deep neural networks and transformers \cite{fazlyab2019lipschitz,dongmian2020lipschitz,combettes2020lipschitz,zhang2022lipschitz,pmlr-v139-kim21i}). It follows that the gradient of $f$ is given by
\begin{equation}\label{eq:batch}
    \nabla f(x) = \frac{1}{m}\sum_{i \in [m]}\nabla \ell(x;\xi^i).
\end{equation} As discussed in Section \ref{subsec:oracle}, when queried in iteration $t$ with input $x_t$, the \sfo returns the estimator 
\begin{equation}\label{eq:estimator}
    g_t = \frac{1}{|S_t|}\sum_{j \in S_t}\nabla \ell(x_t;\xi^j),    
\end{equation} where $S_t \subset [m]$ is a set of indices drawn uniformly at random. We now want to verify that $g_t$ satisfies Assumption \ref{asmpt:noise-bounded}. By the definition of the \sfo, we have
\begin{equation}\label{eq:unbiased}
    \E[g_t \: \vert \: \mathcal{F}_t] = \frac{1}{|S_t|}\sum_{i \in [m]}\nabla \ell(x_t;\xi^i)\Prob(i \in S_t\:\vert\:\mathcal{F}_t) = \nabla f(x_t),
\end{equation} where the last equality follows from the fact that $\Prob(i \in S_t\:\vert\: \mathcal{F}_t) = \frac{|S_t|}{m}$ and \eqref{eq:batch}. Moreover, using \eqref{eq:batch}-\eqref{eq:estimator} and the triangle inequality, it can be seen that the noise $z_t = g_t - \nabla f(x_t)$ satisfies
\begin{equation}\label{eq:bounded}
    \|z_t\| \leq \|g_t\| + \|\nabla f(x_t)\| \leq \frac{1}{|S_t|}\sum_{j \in S_t}\|\nabla \ell(x_t;\xi^j)\| + \frac{1}{m}\sum_{i \in [m]}\|\nabla \ell(x_t;\xi^i)\| \leq 2G,
\end{equation} where the last inequality follows from the fact that the loss $\ell$ has $G$-bounded gradients. Equations \eqref{eq:unbiased} and \eqref{eq:bounded} readily imply that Assumption \ref{asmpt:noise-bounded} holds for the noise induced by the \sfo, as claimed.

\section{Intermediate Results}\label{app:prelim}

We start by stating an important result, crucial to establishing LDP-style bounds, known as the G\"{a}rnter-Ellis theorem, see, e.g., \citet{dembo2009large}. 

\begin{proposition}\label{prop:gartner-ellis}
    Let $\Lambda_t: \R \mapsto \R$ be a sequence of log moment-generating functions induced by a sequence of measures $\mu_t: \mathcal{B}(\R) \mapsto [0,1]$, $t \in \N$. If for some $\{n_t\}_{t \in \N}$, such that $n_t > 0$ and $\lim_{t \rightarrow \infty}n_t = \infty$, and each $\lambda \in \R$, we have
    \begin{equation*}
        \limsup_{t \rightarrow \infty}\frac{\Lambda_t(n_t\lambda)}{n_t} \leq \varphi(\lambda) < \infty,
    \end{equation*} then $\{\mu_t\}_{t \in \N}$ satisfies the LDP upper bound with decay rate $n_t$ and rate function $I: \R \mapsto [0,\infty]$, given by the Fenchel-Legendre transform of $\varphi$, i.e., $I(x) = \varphi^\star(x) = \sup_{\lambda \in \R}\left\{ x\lambda - \varphi(\lambda) \right\}$.
\end{proposition}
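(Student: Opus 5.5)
The plan is the standard proof of the upper half of the G\"{a}rtner-Ellis theorem: a Chernoff bound on small balls, extended to compact sets by a finite covering, then to closed sets via exponential tightness. At the end we apply the closed-set bound to $\overline{B}$. Write $\mu_t$ for the law of a real random variable $Y_t$, so that $\Lambda_t(s) = \log\E[e^{sY_t}]$ and $\Lambda_t(0) = 0$.

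\emph{Preliminaries on the rate function.} Since $\Lambda_t(0)=0$, the hypothesis at $\lambda = 0$ only says $\varphi(0) \geq 0$. Replacing $\varphi(0)$ by $0$ preserves the hypothesis, and this replacement can only increase $\varphi^\star$. Hence we may assume $\varphi(0)=0$, which gives $I = \varphi^\star \geq 0$. Moreover, $I$ is a supremum of affine functions of $x$, so it is convex and lower semi-continuous, hence a valid rate function.

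\emph{Step 1 (compact sets).} Fix $\delta>0$ and a compact $K$. For each $y\in K$, choose $\lambda_y$ with $\lambda_y y - \varphi(\lambda_y) \geq \min\{I(y)-\delta, 1/\delta\}$; this is possible by the definition of the supremum. Then choose $\rho_y>0$ with $|\lambda_y|\rho_y \leq \delta$. By the exponential Chebyshev inequality, on the ball $B_y \coloneqq (y-\rho_y,y+\rho_y)$ we have
\begin{equation*}
\Prob(Y_t \in B_y) \leq \E\big[e^{n_t\lambda_y Y_t}\big]e^{-n_t\inf_{z\in B_y}\lambda_y z} \leq \exp\big(\Lambda_t(n_t\lambda_y) - n_t\lambda_y y + n_t\delta\big).
\end{equation*}
Dividing by $n_t$ and using the hypothesis gives $\limsup_t \frac1{n_t}\log\Prob(Y_t\in B_y) \leq -\min\{I(y)-\delta,1/\delta\}+\delta$. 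Extract a finite subcover $B_{y_1},\dots,B_{y_N}$ of $K$ and apply the union bound. The factor $N$ contributes $\frac{\log N}{n_t}\to0$, so $\limsup_t \frac1{n_t}\log\mu_t(K) \leq -\min\{\inf_K I - \delta, 1/\delta\} + \delta$. Letting $\delta\downarrow 0$ yields the bound $-\inf_K I$.

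\emph{Step 2 (exponential tightness and closed sets).} This is the step where finiteness of $\varphi$ \emph{everywhere} is genuinely used; finiteness at $\lambda=\pm1$ suffices here. By Chebyshev, $\Prob(|Y_t|>R) \leq e^{-n_tR}\big(\E[e^{n_tY_t}]+\E[e^{-n_tY_t}]\big)$. Hence $\limsup_t \frac1{n_t}\log\Prob(|Y_t|>R) \leq \max\{\varphi(1),\varphi(-1)\} - R$, which tends to $-\infty$ as $R\to\infty$. Now let $F$ be closed and split $F \subseteq (F\cap[-R,R]) \cup \{|y|>R\}$. Step 1 applies to the compact piece, and the standard estimate $\limsup\frac1{n_t}\log(a_t+b_t) \leq \max\{\limsup\frac1{n_t}\log a_t,\limsup\frac1{n_t}\log b_t\}$ combines the two pieces. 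Letting $R\to\infty$ gives $\limsup_t \frac1{n_t}\log\mu_t(F) \leq -\inf_F I$. For an arbitrary measurable $B$, apply this to $F=\overline{B}$ and use monotonicity, $\mu_t(B)\leq\mu_t(\overline B)$.

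The main point of care is Step 1 when $I(y)=+\infty$ or when the supremum defining $I(y)$ is not attained; the truncation $\min\{\cdot,1/\delta\}$ handles both cases. Everything else is routine.
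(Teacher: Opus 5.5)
Your proposal is correct and is the standard proof of the upper half of the G\"{a}rtner--Ellis theorem from the reference the paper cites (Dembo--Zeitouni); the paper itself only invokes that result and does not reprove it. The ingredients match: Chebyshev bounds on small balls with the truncated rate $\min\{I(y)-\delta,1/\delta\}$, a finite subcover for compact sets, and exponential tightness from $\varphi(\pm1)<\infty$ to pass to closed sets. Your reduction to $\varphi(0)=0$ is also sound, since it can only enlarge $\varphi^\star$ and hence strengthen the bound, and it guarantees $I\geq 0$ as the statement asserts.
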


Next, we prove Lemma \ref{lm:sub-gauss}. For completeness, we restate it below.

{
\renewcommand{\thetheorem}{2}
\begin{lemma}
    Let Assumption \ref{asmpt:noise-bounded} hold. Then the following are true, for any $t \geq 1$. 
    \begin{enumerate}
        \item The noise is $M$-sub-Gaussian, i.e., we have $\E\Big[\exp\Big(\frac{\|z_t\|^2}{M^2} \Big) \: \big\vert \: \mathcal{F}_t \Big] \leq \exp(1)$.

        \item For any $\mathcal{F}_t$-measurable vector $x \in \R^d$, we have $\E\big[\exp\big(\langle x,z_t\rangle \big) \: \vert \: \mathcal{F}_t \big] \leq \exp\Big(\frac{3M^2\|x\|^2}{4} \Big)$.
    \end{enumerate}
\end{lemma}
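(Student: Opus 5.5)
Part 1 is immediate from the almost sure bound in Assumption \ref{asmpt:noise-bounded}. Since $\|z_t\| \leq M$ a.s., we have $\|z_t\|^2/M^2 \leq 1$ a.s. Hence $\exp(\|z_t\|^2/M^2) \leq \exp(1)$ a.s., and taking conditional expectation given $\mathcal{F}_t$ gives the claim. No further structure of the noise is needed here.

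For part 2, I will use a Hoeffding-type argument that relies on both unbiasedness and boundedness. Fix an $\mathcal{F}_t$-measurable $x$ and set $Y \coloneqq \langle x, z_t\rangle$. Two properties follow at once:
\begin{itemize}
\item by Assumption \ref{asmpt:noise-bounded}, $\E[Y \,\vert\, \mathcal{F}_t] = \langle x, \E[z_t\,\vert\,\mathcal{F}_t]\rangle = 0$, since $x$ is $\mathcal{F}_t$-measurable;
\item by Cauchy--Schwarz, $|Y| \leq \|x\|M$ a.s.
\end{itemize}
Conditionally on $\mathcal{F}_t$, $Y$ is therefore a mean-zero random variable supported in $[-\|x\|M, \|x\|M]$, an interval of length $2M\|x\|$. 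Hoeffding's lemma, applied conditionally, gives
\begin{equation*}
\E[e^{Y}\,\vert\,\mathcal{F}_t] \leq \exp\big((2M\|x\|)^2/8\big) = \exp\big(M^2\|x\|^2/2\big) \leq \exp\big(3M^2\|x\|^2/4\big).
\end{equation*}

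If I want to stay closer to the sub-Gaussian framework of Definition \ref{def:sub-gauss}, which is likely where the constant $3/4$ comes from, I will instead use the elementary inequality $e^{y} \leq y + e^{3y^2/4}$, valid for all real $y$. Taking conditional expectations kills the linear term. This leaves $\E[e^{Y}\,\vert\,\mathcal{F}_t] \leq \E[e^{3Y^2/4}\,\vert\,\mathcal{F}_t] \leq e^{3M^2\|x\|^2/4}$, where the last step uses $Y^2 \leq M^2\|x\|^2$ a.s.

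The only points needing care are two:
\begin{itemize}
\item the conditional version of Hoeffding's lemma, which holds because $x$ is $\mathcal{F}_t$-measurable and so can be treated as a constant under the conditional law, via regular conditional distributions;
\item verifying $e^y \leq y + e^{3y^2/4}$ if that route is chosen.
\end{itemize}
The inequality is standard: for $|y|$ small both sides expand as $1 + y + \dots$ with the right side dominating at second order, and for large $|y|$ the Gaussian term dominates. Either way, I expect no substantive obstacle.
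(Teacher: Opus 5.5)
Your proof is correct, and part 1 is exactly the paper's argument. For part 2, however, you take a different route. You use the almost-sure bound $|\langle x, z_t\rangle| \le M\|x\|$ directly, via conditional Hoeffding or via the pointwise bound on $e^{3Y^2/4}$. This needs no case split and even gives the sharper exponent $M^2\|x\|^2/2$.

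The paper instead normalizes $y_t = z_t/M$ and uses \emph{only} the conclusion of part 1, $\E[\exp(\|y_t\|^2)\,\vert\,\mathcal{F}_t] \le e$, together with unbiasedness. For $\|x\| \le 4/3$ it combines $e^a \le a + e^{9a^2/16}$ with Cauchy--Schwarz and Jensen; Jensen applies because $9\|x\|^2/16 \le 1$. For $\|x\| > 4/3$ it uses Young's inequality $\langle x, y_t\rangle \le \frac{3\|x\|^2}{8} + \frac{2\|y_t\|^2}{3}$ and Jensen again. The constant $3/4$ comes from absorbing $2/3$ into $3\|x\|^2/8$.

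The paper's route is more general: it proves part 2 for any conditionally zero-mean noise satisfying the conditional $M$-sub-Gaussian bound of part 1, bounded or not. That generality is never used in this paper. Both invocations of part 2 concern a.s.\ bounded noise: the proof of Theorem~\ref{thm:main-non-conv}, and Lemma~\ref{lm:clip-bias-bound} with $M = 2\gamma_t$, where $\|\theta_t^u\| \le 2\gamma_t$ a.s. So your shorter argument suffices for everything downstream.

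One minor point: your small-$|y|$ and large-$|y|$ heuristic for $e^y \le y + e^{3y^2/4}$ does not by itself handle intermediate $y$. It is cleaner to note that this follows from the standard inequality $e^y \le y + e^{9y^2/16}$, since $9/16 < 3/4$. Alternatively, rely on your Hoeffding route, which is already complete.
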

}

\begin{proof}
    The first claim follows directly from Assumption \ref{asmpt:noise-bounded} and Definition \ref{def:sub-gauss}. To prove the second claim, we follow a similar approach to, e.g., \citet[Lemma 1]{li2020high}. For ease of notation, let $y_t \coloneqq \frac{z_t}{M}$ and note that from the first claim we have 
    \begin{equation}\label{eq:sub-gauss-y}
        \E\big[\exp(\|y_t\|^2) \: \vert \: \mathcal{F}_t\big] \leq \exp(1).
    \end{equation} Next, let $x \in \R^d$ be $\mathcal{F}_t$-measurable and assume first that $\|x\| \leq \frac{4}{3}$. Using the inequality $\exp(a) \leq a + \exp(\nicefrac{9a^2}{16})$, which holds for any $a \in \R$, we then have
    \begin{align*}
        \E\lbr\exp\lp\langle x,y_t \rangle\rp \: \vert \: \mathcal{F}_t\rbr &\leq \E\lbr \langle x,y_t \rangle + \exp\lp\frac{9\langle x,y_t \rangle^2}{16}\rp \: \big\vert \: \mathcal{F}_t\rbr \stackrel{(a)}{=} \E\lbr\exp\lp\frac{9\langle x,y_t \rangle^2}{16}\rp \: \big\vert \: \mathcal{F}_t\rbr \\
        &\stackrel{(b)}{\leq} \E\lbr\exp\lp\frac{9\|x\|^2\|y_t\|^2 }{16}\rp \: \big\vert \: \mathcal{F}_t\rbr \stackrel{(c)}{\leq} \lp\E\lbr\exp\lp\|y_t\|^2\rp \: \vert \: \mathcal{F}_t\rbr\rp^{\nicefrac{9\|x\|^2}{16}} \\
        &\stackrel{(d)}{\leq} \exp\lp \frac{9\|x\|^2}{16} \rp \leq \exp\lp \frac{3\|x\|^2}{4} \rp,
    \end{align*} where $(a)$ follows from the facts that $x$ is $\mathcal{F}_t$-measurable and the noise is unbiased, $(b)$ follows from the Cauchy-Schwartz inequality, in $(c)$ we use the fact that $\frac{9\|x\|^2}{16} \leq 1$ and Jensen's inequality, while $(d)$ follows from \eqref{eq:sub-gauss-y}. On the other hand, if $\|x\| > \frac{4}{3}$, we use Young's inequality, i.e., $ab \leq \frac{a^2}{2\epsilon} + \frac{\epsilon b^2}{2}$, with $\epsilon = \frac{4}{3}$, to get 
    \begin{align*}
        \E[\exp(\langle x,y_t\rangle) \: \vert \: \mathcal{F}_t] &\leq \exp\lp \frac{3\|x\|^2}{8} \rp\E\lbr\exp\lp \frac{2\|y_t\|^2}{3} \rp \: \big\vert \: \mathcal{F}_t \rbr \\
        &\leq \exp\lp \frac{2}{3} + \frac{3\|x\|^2}{8} \rp \leq \exp\lp \frac{3\|x\|^2}{4} \rp,
    \end{align*} where the second inequality follows from Jensen's inequality and \eqref{eq:sub-gauss-y}, while the third inequality follows from the fact that $\frac{2}{3} < \frac{3\|x\|^2}{8}$, since $\|x\| > \frac{4}{3}$. Combining both cases, we get 
    \begin{equation}\label{eq:last-step}
        \E[\exp\lp\langle x,y_t \rangle \rp \:\vert\:\mathcal{F}_t] \leq \exp\lp\frac{3\|x\|^2}{4}\rp.    
    \end{equation} The claim follows by noting that $\langle x,z_t \rangle = \langle Mx,y_t\rangle$ and applying \eqref{eq:last-step}.
\end{proof}

Prior to proving Lemma \ref{lm:clip-bias-bound}, we provide a known technical result on the behaviour of the clipping operator, see, e.g., \citet[Lemma 5.1]{sadiev2023highprobability}.

\begin{proposition}\label{prop:sadiev-technical}
    Let $X \in \R^d$ be a random vector and let $\widetilde{X} = \min\Big\{1,\frac{\gamma}{\|X\|}\Big\}X$ be its clipped version. If $\E[X] = x$, $\E\|x - X\|^p \leq \sigma^p$ for some $p \in (1,2]$ and $\|x\| \leq \frac{\gamma}{2}$, then 
    \begin{equation*}
        \big\|\E[\widetilde{X}] - x\big\| \leq 4\sigma^p\gamma^{1-p}.    
    \end{equation*}
\end{proposition}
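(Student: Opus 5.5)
This statement is Proposition~\ref{prop:sadiev-technical}, a known result \citet[Lemma 5.1]{sadiev2023highprobability}. Still, I would prove it directly with a short argument based on the event that clipping is active. Write $\chi \coloneqq \mathds{1}\{\|X\| > \gamma\}$ and note that $\widetilde{X} - X = \chi\big(\tfrac{\gamma}{\|X\|} - 1\big)X$. Since $\E[X] = x$, we get
\begin{equation*}
\E[\widetilde{X}] - x = \E[\widetilde{X} - X].
\end{equation*}
Taking norms and using $\big|\tfrac{\gamma}{\|X\|} - 1\big|\,\|X\| = \|X\| - \gamma \le \|X\|$ on $\{\chi = 1\}$ gives
\begin{equation*}
\big\|\E[\widetilde{X}] - x\big\| \le \E\big[\chi\,(\|X\| - \gamma)\big] \le \E\big[\chi\,\|X\|\big].
\end{equation*}
So the whole task reduces to bounding the truncated first moment of $\|X\|$ on the clipping event by means of the $p$-th moment of the noise $X - x$.

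The key step uses $\|x\| \le \gamma/2$. On $\{\|X\| > \gamma\}$, the triangle inequality gives
\begin{equation*}
\|X - x\| \ge \|X\| - \|x\| > \gamma/2,
\end{equation*}
and also $\|X\| \le \|X - x\| + \|x\| \le 2\|X - x\|$. Hence
\begin{equation*}
\E[\chi\|X\|] \le 2\,\E\big[\|X - x\|\,\mathds{1}\{\|X - x\| > \gamma/2\}\big].
\end{equation*}
On that event, $\|X - x\|^{1-p} \le (\gamma/2)^{1-p}$ because $p > 1$. Therefore
\begin{equation*}
\|X - x\| = \|X - x\|^p\,\|X - x\|^{1-p} \le (\gamma/2)^{1-p}\|X - x\|^p .
\end{equation*}
Taking expectations yields the bound $2\cdot 2^{p-1}\sigma^p\gamma^{1-p} = 2^p\sigma^p\gamma^{1-p} \le 4\sigma^p\gamma^{1-p}$, since $p \le 2$. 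This is exactly the claim.

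I do not expect a real obstacle here. The only point needing care is to transfer the clipping event $\{\|X\| > \gamma\}$ to a tail event for the centered noise, $\{\|X - x\| > \gamma/2\}$, which is precisely where $\|x\| \le \gamma/2$ enters. The constant $4$ is not tight; a slightly sharper bookkeeping would keep the factor $(\|X\| - \gamma)$, but it is unnecessary. If one wanted a cleaner constant, one could instead use Markov/Hölder on $\chi$, but the direct pointwise inequality above is simplest. In the paper's application (Lemma~\ref{lm:clip-bias-bound}), one conditions on $\mathcal{F}_t$ and applies this with $X = g_t$, $x = \nabla f(x_t)$ and $\gamma = \gamma_t \ge 2G \ge 2\|\nabla f(x_t)\|$, which is why the clipping threshold carries the factor $2G$.
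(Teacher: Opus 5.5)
Your proof is correct. The paper gives no argument of its own and simply invokes \citet[Lemma 5.1]{sadiev2023highprobability}, whose proof has the same skeleton as yours: reduce to $\E[\chi\|X\|]$, then use $\|x\|\le\gamma/2$ to pass from $\{\|X\|>\gamma\}$ to $\{\|X-x\|>\gamma/2\}$. The only difference is that the cited argument splits $\|X\|\le\|X-x\|+\|x\|$ and bounds the two pieces by H\"older's and Markov's inequalities, each giving $2^{p-1}\sigma^p\gamma^{1-p}$, whereas your pointwise bounds $\|X\|\le 2\|X-x\|$ and $\|X-x\|\le(\gamma/2)^{1-p}\|X-x\|^p$ on that event reach the same constant $2^p\le 4$ more directly.
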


We are now ready to prove Lemma \ref{lm:clip-bias-bound}, which we restate below, for the reader's convenience.

{
\renewcommand{\thetheorem}{5}
\begin{lemma}
    Let Assumptions \ref{asmpt:cost} and \ref{asmpt:noise-heavy} hold and let the clipping threshold be chosen as
    \begin{equation*}
        \gamma_t = \begin{cases}
            2G(t+1)^{\frac{2-p}{6p-4}}, & p \in (1,2) \\
            2G\sqrt{\log(t+1)}, & p = 2.
        \end{cases}
    \end{equation*} Then the following are true, for any $t \geq 1$.
    \begin{enumerate}
        \item $\|\theta_t^b\| \leq 4\sigma^p\gamma_t^{1-p}$.

        \item For any $\mathcal{F}_t$-measurable $x \in \R^d$, we have $\E\big[\exp\big(\langle x, \theta_t^u\rangle \big) \: \vert \: \mathcal{F}_t \big] \leq \exp\big(3\gamma_t^2\|x\|^2\big)$.
    \end{enumerate}
\end{lemma}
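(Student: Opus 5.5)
For the first claim, I will apply Proposition \ref{prop:sadiev-technical} conditionally on $\mathcal{F}_t$. Take $X = g_t$ under the conditional law given $\mathcal{F}_t$, so that $x = \nabla f(x_t)$ and $\widetilde{X} = \widetilde{g}_t$. By Assumption \ref{asmpt:noise-heavy}, $\E[g_t \vert \mathcal{F}_t] = \nabla f(x_t)$ and $\E[\|z_t\|^p \vert \mathcal{F}_t] \leq \sigma^p$. The remaining hypothesis is $\|\nabla f(x_t)\| \leq \gamma_t/2$.

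By Assumption \ref{asmpt:cost}, $\|\nabla f(x_t)\| \leq G$, so it suffices to check $\gamma_t \geq 2G$ for all $t \geq 1$.
\begin{itemize}
    \item For $p \in (1,2)$: the exponent $\frac{2-p}{6p-4}$ is positive and $t+1 \geq 2$, so $(t+1)^{\frac{2-p}{6p-4}} \geq 1$.
    \item For $p = 2$: $\sqrt{\log(t+1)} \geq \sqrt{\log 2} < 1$, so the check fails at small $t$.
\end{itemize}
I expect this $p=2$ case to be the only delicate point. I would first verify whether Proposition \ref{prop:sadiev-technical} actually needs $\|x\| \leq \gamma/2$ or tolerates a slightly weaker condition. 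If it does not, I would handle small $t$ separately, or read the threshold as $\log(t+1)$ with a base making it at least $1$, i.e. effectively $\gamma_t \ge 2G$. Once the hypothesis holds, the proposition gives $\|\theta_t^b\| = \|\E[\widetilde{g}_t\vert\mathcal{F}_t] - \nabla f(x_t)\| \leq 4\sigma^p\gamma_t^{1-p}$.

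For the second claim, the key facts are that $\|\widetilde{g}_t\| \leq \gamma_t$ and that $\|\E[\widetilde{g}_t \vert \mathcal{F}_t]\| \leq \gamma_t$ by Jensen's inequality. Hence $\|\theta_t^u\| \leq 2\gamma_t$ a.s., and $\E[\theta_t^u \vert \mathcal{F}_t] = 0$. Therefore $\theta_t^u$ satisfies Assumption \ref{asmpt:noise-bounded}-type conditions with $M = 2\gamma_t$: it is conditionally zero-mean and a.s. bounded, with $\gamma_t$ deterministic. The proof of Lemma \ref{lm:sub-gauss}, part 2, uses only these two properties, not the specific structure of $z_t$.

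Repeating that argument with $y_t = \theta_t^u/(2\gamma_t)$ gives $\E[\exp(\|y_t\|^2)\vert\mathcal{F}_t] \leq e$. The two cases $\|x\| \leq 4/3$ and $\|x\| > 4/3$ then go through unchanged, yielding
\begin{equation*}
\E[\exp(\langle x,\theta_t^u\rangle)\vert\mathcal{F}_t] \leq \exp\big(\tfrac{3}{4}(2\gamma_t)^2\|x\|^2\big) = \exp(3\gamma_t^2\|x\|^2),
\end{equation*}
which is exactly the stated bound. The constant $3$ matches $\tfrac34\cdot 4$, which confirms that $M=2\gamma_t$ is the intended route.
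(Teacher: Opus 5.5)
Your proposal matches the paper's proof. The paper applies Proposition~\ref{prop:sadiev-technical} conditionally on $\mathcal{F}_t$ after asserting $\|\nabla f(x_t)\| \leq G \leq \gamma_t/2$. For the second claim it uses $\|\theta_t^u\| \leq \|\widetilde{g}_t\| + \E_t\|\widetilde{g}_t\| \leq 2\gamma_t$ and then Lemma~\ref{lm:sub-gauss} with $M = 2\gamma_t$, exactly as you do.

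The $p=2$ issue you flag is a genuine oversight in the paper, which simply claims $G \leq \gamma_t/2$ ``from the choice of clipping threshold''. With the natural logarithm (the one used in the paper's Darboux bounds and in $\text{li}$), $\gamma_1 = 2G\sqrt{\log 2} < 2G$. Only $t=1$ is affected, since $\gamma_t \geq 2G$ for all $t \geq 2$ because $\log 3 > 1$.

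Your first fix closes the gap without reinterpreting the logarithm. Rerun the proposition's argument under the weaker condition $\|x\| \leq c\gamma$ with $c < 1$. Using $\|X\| \leq \|X-x\| + c\gamma$ and $(u-a)_+ \leq u^p a^{1-p}$ for $a > 0$, you get $\|\E[\widetilde{X}] - x\| \leq \E\big(\|X - x\| - (1-c)\gamma\big)_+ \leq (1-c)^{1-p}\sigma^p\gamma^{1-p}$. At $t = 1$ you have $c = 1/(2\sqrt{\log 2}) \approx 0.60$, so for $p=2$ the constant is $(1-c)^{-1} \approx 2.5 \leq 4$, and the first claim holds for all $t \geq 1$.
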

}

\begin{proof}
    To prove the first claim, we use Assumption \ref{asmpt:cost} and the fact that, for any $t \geq 1$ 
    \begin{equation*}
        \|\nabla f(x_t)\| \leq G \leq \frac{\gamma_t}{2},
    \end{equation*} where the second inequality follows from the choice of clipping threshold in \eqref{eq:clipping-radius}. The claim now follows by applying Proposition \ref{prop:sadiev-technical} to $\widetilde{g}_t$. To prove the second claim, note that, by the definition of $\theta_t^u$, we have
    \begin{equation*}
        \|\theta_t^u\| = \|\widetilde{g}_t - \E_t[\widetilde{g}_t]\| \leq \|\widetilde{g}_t\| + \E_t\|\widetilde{g}_t\| \leq 2\gamma_t.
    \end{equation*} The claim now readily follows from Lemma \ref{lm:sub-gauss}, setting $M = 2\gamma_t$.
\end{proof}

\section{Proof of Theorem \ref{thm:main-non-conv}}\label{app:non-conv}

Using the $L$-smoothness and the \sgd update rule~\eqref{eq:sgd} with $\Psi_t(x) = x$, we get
\begin{align*}
    f(x_{k+1}) &\leq f(x_k) + \langle \nabla f(x_k), x_{k+1} - x_k \rangle + \frac{L}{2}\|x_{k+1} - x_k\|^2 \\ 
    &\leq f(x_k) - \alpha_k\left(1 - \frac{\alpha_kL}{2} \right)\|\nabla f(x_k)\|^2 - \alpha_k(1-\alpha_kL)\langle \nabla f(x_k), z_k\rangle + \frac{\alpha_k^2L}{2}\|z_k\|^2 \\ 
    &\leq f(x_k) - \frac{\alpha_k}{2}\|\nabla f(x_k)\|^2 - \alpha_k(1-\alpha_kL)\langle \nabla f(x_k), z_k\rangle + \frac{\alpha_k^2LM^2}{2},
\end{align*} where the third inequality follows from Assumption \ref{asmpt:noise-bounded} and the choice $a \leq \frac{1}{L}$. Rearranging, summing up the first $t$ iterates and using the fact that the step-sizes are non-increasing, we have
\begin{equation}\label{eq:init-step}
    \frac{\alpha_t}{2}\sum_{k = 1}^t\|\nabla f(x_k)\|^2 \leq f(x_1) - f_\star + \sum_{k = 1}^t\alpha_k(\alpha_kL - 1)\langle \nabla f(x_k), z_k \rangle + \frac{LM^2}{2}\sum_{k = 1}^t\alpha_k^2.
\end{equation} Multiplying both sides in \eqref{eq:init-step} by $\frac{2}{\alpha_tt}$ and using the fact that $\min_{k \in [t]}\|\nabla f(x_k)\|^2 \leq \frac{1}{t}\sum_{k = 1}^t\|\nabla f(x_k)\|^2$, we get
\begin{equation}\label{eq:setup}
    \min_{k \in [t]}\|\nabla f(x_k)\|^2 \leq \frac{2}{\alpha_tt}\left(f(x_1) - f_\star + \sum_{k = 1}^t\alpha_k(\alpha_kL - 1)\langle \nabla f(x_k), z_k \rangle + \frac{LM^2}{2}\sum_{k = 1}^t\alpha_k^2\right).
\end{equation} Recall that we use the shorthand $F_t = \min_{k \in [t]}\|\nabla f(x_k)\|^2$, let $\Delta \coloneqq f(x_1) - f_\star$ and denote by $\lambda_t \coloneqq n_t\lambda$, where $\lambda \in \R$ and $\{n_t\}_{t \in \N}$ is a positive sequence to be specified later. We now want to bound the log MGF of $F_t$, denoted by $\Lambda_t$, i.e., $\Lambda_t(\lambda) \coloneqq \log\E\left[\exp\left(\lambda F_t\right)\right]$, for any $\lambda \in \R$. First, note that for any $\lambda < 0$, we have $\Lambda_t(\lambda_t) \leq 0$  (since $n_t,F_t \geq 0$), hence  
\begin{equation}\label{eq:key-ineq1}
    \limsup_{t \rightarrow \infty}\frac{\Lambda(n_t\lambda)}{n_t} \leq 0.
\end{equation} Next, consider any $\lambda \geq 0$. For ease of notation, let $b_t \coloneqq \frac{2\lambda_t}{\alpha_tt}$. From the definition of $\lambda_t$ and \eqref{eq:setup}, we have 
\begin{align*}
    &\E\big[\exp\big(\lambda_t F_t\big) \big] \leq \E\bigg[\exp\bigg(b_t\bigg(\Delta + \sum_{k = 1}^t\alpha_k(\alpha_kL - 1)\langle \nabla f(x_k), z_k \rangle + \frac{LM^2}{2}\sum_{k = 1}^t\alpha_k^2 \bigg)\bigg)\bigg] \\ 
    &= \exp\bigg(b_t\bigg(\Delta + \frac{LM^2}{2}\sum_{k = 1}^t\alpha_k^2\bigg)\bigg)\E\bigg[\exp\bigg(b_t\sum_{k = 1}^t\alpha_k(\alpha_kL - 1)\langle \nabla f(x_k), z_k\rangle\bigg)\bigg].
\end{align*} We now proceed to bound the last term. For ease of notation, let $Z_k \coloneqq \alpha_k(\alpha_kL - 1)\langle \nabla f(x_k), z_k\rangle$ and $\E_k[\cdot] \coloneqq \E[ \cdot | \mathcal{F}_k]$. Using Lemma \ref{lm:sub-gauss}, we then get
\begin{align*}
    \E\bigg[\exp\bigg(b_t\sum_{k = 1}^tZ_k\bigg)\bigg] &= \E\bigg[\exp\bigg(b_t\sum_{k = 1}^{t-1}Z_k \bigg)\E_t\Big[\exp\big(b_t\alpha_t(\alpha_tL - 1)\langle \nabla f(x_t), z_t\rangle\big) \Big] \bigg] \\
    &\leq \E\bigg[\exp\bigg(b_t\sum_{k = 1}^{t-1}Z_k \bigg)\exp\bigg(\frac{3b_t^2\alpha_t^2(1-\alpha_tL)^2M^2\|\nabla f(x_t)\|^2}{4}\bigg)\bigg] \\
    &\leq \exp\bigg(\frac{3b_t^2\alpha_t^2M^2G^2}{4}\bigg)\E\bigg[\exp\bigg(b_t\sum_{k = 1}^{t-1}Z_k \bigg) \bigg] \\
    &\leq \ldots \leq \exp\bigg(\frac{3b_t^2M^2G^2}{4}\sum_{k = 1}^t\alpha_k^2\bigg),
\end{align*} where the second inequality follows from Assumption \ref{asmpt:cost} and the fact that $(1-\alpha_kL)^2 \leq 1$, for all $k \geq 1$. Combining everything, we get
\begin{equation*}
    \E[\exp(\lambda_tF_t)] \leq \exp\bigg(\frac{2\lambda_t\Delta}{\alpha_tt} + \frac{3\lambda_t^2M^2G^2}{\alpha_t^2t^2}\sum_{k = 1}^t\alpha_k^2 + \frac{2\lambda_tLM^2}{\alpha_tt}\sum_{k = 1}^t\alpha_k^2 \bigg).
\end{equation*} Taking the logarithm, dividing by $n_t$ and recalling that $\alpha_k = \frac{a}{\sqrt{k+1}}$, we have
\begin{equation}\label{eq:semi-final}
    \frac{\Lambda_t(\lambda_t)}{n_t} \leq \frac{2\sqrt{2}\lambda\Delta}{a\sqrt{t}} + \frac{6n_t\lambda^2M^2G^2}{t}\sum_{k = 1}^t\frac{1}{k+1} + \frac{2\sqrt{2}a\lambda LM^2}{\sqrt{t}}\sum_{k = 1}^t\frac{1}{k+1}.
\end{equation} Using the Darboux sum approximation, we then get 
\begin{equation*}
    \sum_{k = 1}^t\frac{1}{k+1} \leq \int_{0}^{t}\frac{1}{k+1}dk = \log(t+1).
\end{equation*} Plugging in \eqref{eq:semi-final}, it follows that
\begin{equation*}
    \frac{\Lambda_t(\lambda_t)}{n_t} \leq \frac{2\sqrt{2}\lambda\Delta}{a\sqrt{t}} + \frac{6n_t\lambda^2M^2G^2\log(t+1)}{t} + \frac{2\sqrt{2}a\lambda LM^2\log(t+1)}{\sqrt{t}}.
\end{equation*} Choosing $n_t = \frac{t}{\log(t)}$ and taking the $\limsup$, we finally get
\begin{equation}\label{eq:key-ineq2}
    \limsup_{t \rightarrow \infty}\frac{\Lambda_t(n_t\lambda)}{n_t} \leq 6\lambda^2M^2G^2.
\end{equation} Define the continuous function $\varphi: \R \mapsto [0,\infty)$, given by
\begin{equation*}
    \varphi(\lambda) = \begin{cases}
        6\lambda^2M^2G^2, & \lambda \geq 0 \\
        0, & \lambda < 0.
    \end{cases}
\end{equation*} From \eqref{eq:key-ineq1} and \eqref{eq:key-ineq2}, it follows that, for any $\lambda \in \R$
\begin{equation*}
    \limsup_{t \rightarrow \infty}\frac{\Lambda_t(n_t\lambda)}{n_t} \leq \varphi(\lambda) < +\infty.
\end{equation*} The proof is then completed by invoking Proposition \ref{prop:gartner-ellis} and noting that the Fenchel-Legendre transform of $\varphi$ is given by
\begin{equation*}
    \varphi^\star(x) = \begin{cases}
        \frac{x^2}{24M^2G^2}, & x \geq 0 \\
        +\infty, & x < 0.
    \end{cases}
\end{equation*}

\section{Proof of Theorem \ref{thm:main-non-conv-clip}}\label{app:clip}

Recall that the clipping bias is decomposed as $\widetilde{g}_k - \nabla f(x_k) = \theta_k^u + \theta_k^b$, where $\theta_k^u = \widetilde{g}_k - \E_k[\widetilde{g}_k]$ and $\theta_k^b = \E_k[\widetilde{g}_k] - \nabla f(x_k)$. Using $L$-smoothness and the \csgd update rule \eqref{eq:sgd}, with $\Psi_t(x) = \min\Big\{1,\frac{\gamma_t}{\|x\|}\Big\}x$, we then have
\begin{align*}
    f(x_{k+1}) &\leq f(x_k) - \alpha_k\langle \nabla f(x_k), \widetilde{g}_k \rangle + \frac{\alpha_k^2L}{2}\|\widetilde{g}_k\|^2 \\
    &\leq f(x_k) - \alpha_k\|\nabla f(x_k)\|^2 - \alpha_k\langle \nabla f(x_k), \theta_k^u + \theta_k^b\rangle + \frac{\alpha_k^2\gamma_k^2L}{2} \\
    &\leq f(x_k) - \alpha_k\|\nabla f(x_k)\|^2 - \alpha_k\langle \nabla f(x_k), \theta_k^u\rangle + \frac{\alpha_k}{2}\|\nabla f(x_k)\|^2 + \frac{\alpha_k}{2}\|\theta_k^b\|^2 + \frac{\alpha_k^2\gamma_k^2L}{2} \\
    &\leq f(x_k) - \frac{\alpha_k}{2}\|\nabla f(x_k)\|^2 - \alpha_k\langle \nabla f(x_k), \theta_k^u\rangle + 8\alpha_k\sigma^{2p}\gamma_k^{2(1-p)} + \frac{\alpha_k^2\gamma_k^2L}{2},
\end{align*} where the third inequality follows from $\langle a,b\rangle \leq \frac{\|a\|^2}{2} + \frac{\|b\|^2}{2}$, while in the fourth inequality we use Lemma \ref{lm:clip-bias-bound}. Rearranging, summing up the first $t$ iterates and using the fact that the step-sizes are non-increasing, we have
\begin{equation}\label{eq:init-step-clip}
    \frac{\alpha_t}{2}\sum_{k = 1}^t\|\nabla f(x_k)\|^2 \leq f(x_1) - f_\star - \sum_{k = 1}^t\alpha_k\langle \nabla f(x_k), \theta_k^u \rangle + \sum_{k = 1}^t\bigg(8\alpha_k\sigma^{2p}\gamma_k^{2(1-p)} + \frac{\alpha_k^2\gamma_k^2L}{2}\bigg).
\end{equation} Multiplying both sides in \eqref{eq:init-step-clip} by $\frac{2}{\alpha_tt}$ and using the fact that $F_t \leq \frac{1}{t}\sum_{k = 1}^t\|\nabla f(x_k)\|^2$, with $F_t = \min_{k \in [t]}\|\nabla f(x_k)\|^2$, we get
\begin{equation}\label{eq:setup-clip}
    F_t \leq \frac{2}{\alpha_tt}\left(f(x_1) - f_\star - \sum_{k = 1}^t\alpha_k\langle \nabla f(x_k), \theta_k^u \rangle + \sum_{k = 1}^t\bigg(8\alpha_k\sigma^{2p}\gamma_k^{2(1-p)} + \frac{\alpha_k^2\gamma_k^2L}{2}\bigg)\right).
\end{equation} Let $\Delta \coloneqq f(x_1) - f_\star$ and denote by $\lambda_t \coloneqq n_t\lambda$, where $\lambda \in \R$ and $\{n_t\}_{t \in \N}$ is a positive sequence to be specified later. We again want to bound the log MGF of $F_t$ evaluated at $\lambda_t$. First, if $\lambda < 0$, it follows that
$\E[\exp(\lambda_tF_t)] \leq 1,$ which readily implies 
\begin{equation}\label{eq:key-ineq1-clip}
    \limsup_{t \rightarrow \infty}\frac{\Lambda(n_t\lambda)}{n_t} \leq 0.
\end{equation} Next, let $\lambda \geq 0$ and denote by $b_t \coloneqq \frac{2\lambda_t}{\alpha_tt}$. From the definition of $\lambda_t$ and \eqref{eq:setup-clip}, we have 
\begin{align}\label{eq:mgf-bdd-clip}
    \E\big[\exp\big(\lambda_t F_t\big) \big] \leq \exp&\bigg(b_t\bigg(\Delta + \sum_{k = 1}^t\bigg(8\alpha_k\sigma^{2p}\gamma_k^{2(1-p)} + \frac{\alpha_k^2\gamma_k^2L}{2} \bigg)\bigg)\bigg) \nonumber \\
    &\times \E\bigg[\exp\bigg(-b_t\sum_{k = 1}^t\alpha_k\langle \nabla f(x_k), \theta_k^u \rangle\bigg)\bigg].
\end{align} To bound the last term, we successively apply Lemma \ref{lm:clip-bias-bound} and use Assumption \ref{asmpt:cost}, to get
\begin{align}\label{eq:noise-bdd-clip}
    \E\bigg[\exp\bigg(-b_t\sum_{k = 1}^t\alpha_k\langle \nabla f(x_k),\theta_k^u \rangle\bigg)\bigg] \leq \exp\bigg(3b_t^2G^2\sum_{k = 1}^t\alpha_k^2\gamma_k^2\bigg).
\end{align} Combining \eqref{eq:mgf-bdd-clip} and \eqref{eq:noise-bdd-clip}, we get
\begin{equation*}
    \E[\exp(\lambda_tF_t)] \leq \exp\Bigg(\frac{2\lambda_t}{\alpha_tt}\bigg(\Delta + \sum_{k = 1}^t\bigg(8\alpha_k\sigma^{2p}\gamma_k^{2(1-p)} + \frac{\alpha_k^2\gamma_k^2L}{2}\bigg)\bigg) + \frac{12\lambda_t^2G^2}{\alpha_t^2t^2}\sum_{k = 1}^t\alpha_k^2\gamma_k^2 \Bigg).
\end{equation*} Taking the logarithm and dividing by $n_t$, we have
\begin{equation}\label{eq:semi-final-clip}
    \frac{\Lambda_t(\lambda_t)}{n_t} \leq \frac{2\lambda}{\alpha_tt}\bigg(\Delta + \sum_{k = 1}^t\bigg(8\alpha_k\sigma^{2p}\gamma_k^{2(1-p)} + \frac{\alpha_k^2\gamma_k^2L}{2}\bigg)\bigg) + \frac{12n_t\lambda^2G^2}{\alpha_t^2t^2}\sum_{k = 1}^t\alpha_k^2\gamma_k^2.
\end{equation} We now consider two cases with respect to the noise moment condition. First, if $p \in (1,2)$, we set $\alpha_k = (k+1)^{-\frac{p}{3p-2}}$, $\gamma_k = 2G(k+1)^{\frac{2-p}{6p-4}}$ and note that $\alpha_tt \geq t^{\frac{2(p-1)}{3p-2}}/2$. Moreover, using the Darboux sum approximation, it follows that
\begin{align}\label{eq:darboux-1}
    \sum_{k = 1}^t\alpha_k\gamma_k^{2(1-p)} &= 4^{1-p}G^{2(1-p)}\sum_{k = 1}^t(k+1)^{\frac{-p + (2-p)(1-p)}{3p-2}} \leq 4^{1-p}G^{2(1-p)}\sum_{k = 1}^t(k+1)^{\frac{2-4p + p^2}{3p-2}} \nonumber \\
    &\leq 4^{1-p}G^{2(1-p)}\int_{1}^{t+1}k^{\frac{2-4p + p^2}{3p-2}}dk \leq C_1t^{\frac{p(p-1)}{3p-2}},
\end{align} where $C_1 = \frac{\sqrt{2}(3p-2)4^{1-p}G^{2(1-p)}}{p(p-1)}$, as well as
\begin{equation}\label{eq:darboux-2}
    \sum_{k = 1}^t\alpha_k^2\gamma_k^2 = 4G^2\sum_{k = 1}^t(k+1)^{\frac{-2p + 2 - p}{3p-2}} = 4G^2\sum_{k = 1}^t\frac{1}{k+1} \leq 4G^2\log(t+1). 
\end{equation} Plugging \eqref{eq:darboux-1} and \eqref{eq:darboux-2} in \eqref{eq:semi-final-clip} and choosing $n_t = \frac{t^{\frac{4(p-1)}{3p-2}}}{\log(t)}$, it then follows that
\begin{align*}
    \frac{\Lambda_t(\lambda_t)}{n_t} &\leq \frac{4\lambda\Delta}{t^{\frac{2(p-1)}{3p-2}}} + \frac{32C_1\sigma^{2p}\lambda}{t^{\frac{(p-1)(2-p)}{3p-2}}} + \frac{8\lambda LG^2\log(t+1)}{t^{\frac{2(p-1)}{3p-2}}} + \frac{192\lambda^2G^4\log(t+1)}{\log(t)}.
\end{align*} Noting that $(p-1)(2-p) > 0$ for any $p \in (1,2)$ and taking the $\limsup$, we get
\begin{equation}\label{eq:key-ineq3-clip}
    \limsup_{t \rightarrow \infty}\frac{\Lambda_t(n_t\lambda)}{n_t} \leq 192\lambda^2G^4.
\end{equation} Next, if $p = 2$, we set $\alpha_k = \frac{1}{\sqrt{k+1}}$, $\gamma_k = 2G\sqrt{\log(k+1)}$ and note that $\alpha_tt \geq \sqrt{\frac{t}{2}}$. Moreover, using Darboux sum approximation, it follows that 
\begin{align}\label{eq:darboux-3}
    \sum_{k = 1}^t\frac{\alpha_k}{\gamma_k^2} &= \frac{1}{4G^2}\sum_{k = 1}^t\frac{1}{\log(k+1)\sqrt{k+1}} = C_2 + \frac{1}{4G^2}\sum_{k = 4}^t\frac{1}{\log(k+1)\sqrt{k+1}} \nonumber \\
    &\leq C_2 + \frac{1}{4G^2}\int_{4}^{t+1}\frac{dk}{\log(k)\sqrt{k}} = C_2 + \frac{1}{4G^2}\int_{2}^{\sqrt{t+1}}\frac{ds}{\log(s)} \leq C_2 + \frac{\text{li}(\sqrt{t+1})}{4G^2},
\end{align} where $C_2 = \frac{1}{4G^2}\sum_{k = 1}^3\frac{1}{\log(k+1)\sqrt{k+1}}$, the fourth (in)equality follows by introducing the substitution $s = \sqrt{k}$, while $\text{li}: (0,\infty) \mapsto \R$ is the logarithmic integral function, given by $\text{li}(x) = \int_0^x\frac{dt}{\log(t)}$. Moreover, we have
\begin{equation}\label{eq:darboux-4}
    \sum_{k = 1}^t\alpha_k^2\gamma_k^2 = 4G^2\sum_{k = 1}^t\frac{\log(k+1)}{k+1} \leq 4G^2\log(t+1)\sum_{k = 1}^t\frac{1}{k+1} \leq 4G^2\log^2(t+1).
\end{equation} Plugging \eqref{eq:darboux-3} and \eqref{eq:darboux-4} in \eqref{eq:semi-final-clip}, using the fact that $\text{li}(x) = \mathcal{O}\big(\frac{x}{\log(x)}\big)$ for $x$ sufficiently large, see, e.g., Chapter 6 in \cite{NIST:DLMF}, and choosing $n_t = \frac{t}{\log^2(t)}$, it then follows that for $t$ sufficiently large
\begin{align*}
    \frac{\Lambda_t(\lambda_t)}{n_t} \leq& \frac{2\sqrt{2}\lambda\Delta}{\sqrt{t}} + \frac{16\sqrt{2}\sigma^4\lambda}{\sqrt{t}}\bigg(C_2 + \mathcal{O}\bigg(\frac{\sqrt{t+1}}{\log(t+1)}\bigg) \bigg) \\
    &+ \frac{4\sqrt{2}\lambda LG^2\log^2(t+1)}{\sqrt{t}} + \frac{96\lambda^2G^4\log^2(t+1)}{\log^2(t)}.
\end{align*} Taking the $\limsup$, we finally get
\begin{equation}\label{eq:key-ineq2-clip}
    \limsup_{t \rightarrow \infty}\frac{\Lambda_t(n_t\lambda)}{n_t} \leq 96\lambda^2G^4.
\end{equation} Define the function $\varphi_p: [0,\infty) \mapsto [0,\infty)$, given by
\begin{equation*}
    \varphi_p(\lambda) = \begin{cases}
        96\lambda^2G^4, & p = 2 \\
        192\lambda^2G^4, & p \in (1,2),
    \end{cases}
\end{equation*} and consider the continuous function $\varphi: \R \times (1,2] \mapsto [0,\infty)$
\begin{equation*}
    \varphi(\lambda,p) = \begin{cases}
        \varphi_p(\lambda), & \lambda \geq 0 \\
        0, & \lambda < 0.
    \end{cases}
\end{equation*} From \eqref{eq:key-ineq1-clip}, \eqref{eq:key-ineq3-clip} and \eqref{eq:key-ineq2-clip}, it follows that, for any $\lambda \in \R$ and $p \in (1,2]$
\begin{equation*}
    \limsup_{t \rightarrow \infty}\frac{\Lambda_t(n_t\lambda)}{n_t} \leq \varphi(\lambda,p) < +\infty.
\end{equation*} The proof is complete by invoking Proposition \ref{prop:gartner-ellis} and noting that the Fenchel-Legendre transform of $\varphi_p$ is given by
\begin{equation*}
    \varphi^\star_p(x) = \begin{cases}
        \frac{x^2}{768G^4}, & p \in (1,2) \\
        \frac{x^2}{384G^4}, & p = 2.
    \end{cases}
\end{equation*}

\section{Proof of Theorem \ref{thm:low-bdd}}\label{app:low-bdd}

To prove Theorem \ref{thm:low-bdd}, we construct a specific instance of \eqref{eq:problem}, which obeys our assumptions and show that there exists a problem related constant $b > 0$ and global constants $a_1,a_2 > 0$, such that, if the iterates $\{x_t\}_{t \in \N}$ are generated by either \sgd or \csgd, then for any $\epsilon \in (0,b)$ and $t \geq 1$
\begin{equation}\label{eq:lower-bdd}
    \Prob(F_t > \epsilon) \geq a_1e^{-a_2t}.
\end{equation} To that end, consider the Huber cost, e.g., \cite{huber_loss}, which is given by
\begin{equation}\label{eq:huber}
    f(x) = \begin{cases}
        \frac{\|x\|^2}{2}, & \|x\| \leq G \\
        G\|x\| - \frac{G^2}{2}, & \|x\| > G
    \end{cases},
\end{equation} for a user-specified threshold $G > 0$. It can be readily seen from the definition in \eqref{eq:huber} that $f$ is lower bounded by zero and continuously differentiable, with the gradient given by
\begin{equation}\label{eq:huber-grad}
    \nabla f(x) = \begin{cases}
        x, & \|x\| \leq G \\
        \frac{Gx}{\|x\|}, & \|x\| > G.
    \end{cases}
\end{equation} It is easy to see from \eqref{eq:huber-grad} that $\|\nabla f(x)\| \leq G$, for all $x \in \R^d$. Moreover, it can be shown that the gradient of $f$ is Lipschitz continuous, with constant $L = 2$, see, e.g., Lemma B.2 in \cite{pmlr-v162-armacki22a}. As such, the Huber cost given in \eqref{eq:huber} satisfies Assumption \ref{asmpt:cost}. Next, let $x_1 \in \R^d$ be a deterministic initialization (hence satisfying Assumption \ref{asmpt:init}), such that 
\begin{equation}\label{eq:init-cond}
  0 < \|x_1\| \leq G.
\end{equation} Finally, consider a \sfo which, when queried with $x_t$, returns $g_t = \nabla f(x_t) + z_t$, where at each time $t \geq 1$, the noise instances $z_t$ are independent and identically distributed, according to the rule
\begin{equation}\label{eq:noise}
    z_t = \begin{cases}
        x_1, & \text{with probability } \frac{1}{2} \\
        -x_1, & \text{with probability } \frac{1}{2}
    \end{cases}.
\end{equation} By definition, the noise is zero-mean. Moreover, using \eqref{eq:init-cond}, it follows that $\|z_t\| = \|x_1\| \leq G$ almost surely, for all $t \geq 1$, hence the noise satisfies Assumption \ref{asmpt:noise-bounded}. Consider first the vanilla \sgd method, with step-size $\alpha_t = \frac{1}{L\sqrt{t+1}} = \frac{1}{2\sqrt{t+1}}$. From \eqref{eq:huber-grad}-\eqref{eq:noise}, it can be readily verified that
\begin{equation*}
    x_2 = x_1 - \alpha_1g_1 = (1-\alpha_1)x_1 - \alpha_1z_1 = \begin{cases}
        x_1, & \text{with probability } \frac{1}{2} \\
        \Big(1 - \frac{1}{\sqrt{2}}\Big)x_1, & \text{with probability } \frac{1}{2}
    \end{cases}.
\end{equation*} Repeating the argument, using the independence of noise, \eqref{eq:noise}, the \sgd update and Bayes's rule, it can be inferred that, for any $t \geq 1$
\begin{equation}\label{eq:prob-identical}
    \Prob(x_t = x_{t-1} = \ldots = x_1) = 2^{-t+1}.
\end{equation} Denote by $B_t$ the event $B_t \coloneqq \{\omega: x_t(\omega) = \ldots = x_2(\omega) = x_1\}$ and note that $\Prob(B_t) \stackrel{\eqref{eq:prob-identical}}{=} 2^{1-t}$. Next, using \eqref{eq:huber-grad}-\eqref{eq:init-cond}, we have, for any $t \geq 1$, conditioned on $B_t$
\begin{equation}\label{eq:identity}
    F_t = \min_{k \in [t]}\|\nabla f(x_k)\|^2 = \|\nabla f(x_1)\|^2 = \|x_1\|^2.  
\end{equation} Finally, using \eqref{eq:prob-identical} and \eqref{eq:identity}, it follows that, for any $\epsilon \in (0,\|x_1\|^2)$
\begin{align*}
    \Prob(F_t > \epsilon) \geq \Prob(\{\omega: F_t(\omega) > \epsilon\} \cap B_t) &= \Prob(F_t > \epsilon \: \vert \: B_t) \Prob(B_t) \\ 
    &= 2^{-t+1}\underbrace{\Prob(\|x_1\|^2 > \epsilon)}_{=1} = 2e^{-t\ln2}, 
\end{align*} which proves \eqref{eq:lower-bdd}, with $b = \|x_1\|^2$, $a_1 = 2$ and $a_2 = \ln 2$. Consider next the \csgd method, using the same step-size $\alpha_t = \frac{1}{2\sqrt{t+1}}$, with any clipping threshold satisfying $\gamma_t \geq 2G$, for all $t \geq 1$ (note that this is clearly satisfied for the clipping threshold specified in \eqref{eq:clipping-radius}). We now want to show, by induction, that for every $t \geq 1$
\begin{equation}\label{eq:clip-hypothesis}
    \|x_t\| \leq G,
\end{equation} which, together with the choice of oracle, cost and clipping threshold, would imply
\begin{equation}\label{eq:no-clip}
    \|g_t\| \leq \|\nabla f(x_t)\| + \|z_t\| \leq \|x_t\| + G \leq 2G \leq \gamma_t.
\end{equation} Equations \eqref{eq:clip-hypothesis} and \eqref{eq:no-clip} indicate that clipping is never performed and \csgd reverts to vanilla \sgd, implying that the same lower bound shown above holds for \csgd. Condition \eqref{eq:clip-hypothesis} clearly holds for the case $t = 1$, by the design of the initialization in \eqref{eq:init-cond}. Therefore, assume that \eqref{eq:clip-hypothesis} holds for some $t > 1$. As shown in \eqref{eq:no-clip}, we then know that clipping will not be performed, i.e., $\widetilde{g}_t = g_t$, and clipping reverts to \sgd in iteration $t+1$, which implies that
\begin{align*}
    \|x_{t+1}\| &= \|x_t - \alpha_t\widetilde{g}_t\| = \|x_t - \alpha_tg_t\| \stackrel{(i)}{=} \|(1-\alpha_t)x_t - \alpha_tz_t\| \\
    &\leq (1-\alpha_t)\|x_t\| + \alpha_t\|z_t\| \leq (1-\alpha_t)G + \alpha_t\|x_1\| \stackrel{(ii)}{\leq} G,
\end{align*} where $(i)$ follows from \eqref{eq:huber-grad} and the induction hypothesis \eqref{eq:clip-hypothesis}, while $(ii)$ follows from \eqref{eq:init-cond}. Therefore, we have shown that \eqref{eq:clip-hypothesis} holds for every $t \geq 1$. Therefore, \csgd reverts to \sgd and the lower bound established above holds for \csgd as well, completing the proof.

\section{On the Metric}\label{app:metric}

As discussed in the main body, while we use the metric $F_t = \min_{k \in [t]}\|\nabla f(x_k)\|^2$, our results continue to hold for the metric $A_t = \frac{1}{t}\sum_{k = 1}^t\|\nabla f(x_k)\|^2$, which is stronger, seeing that $F_t \leq A_t$. To see why this is the case, note that in both the proof of Theorem \ref{thm:main-non-conv} (recall \eqref{eq:init-step}-\eqref{eq:setup}) and that of Theorem \ref{thm:main-non-conv-clip} (recall \eqref{eq:init-step-clip}-\eqref{eq:setup-clip}), we start with a bound on $A_t$ (in \eqref{eq:init-step} and \eqref{eq:init-step-clip}, respectively) and use the fact that $F_t \leq A_t$ to switch to the desired metric $F_t$ (in \eqref{eq:setup} and \eqref{eq:setup-clip}, respectively). It can then be readily seen that skipping the inequality $F_t \leq A_t$, while using the exact same steps in the respective proofs, we would get the same results on the more general metric $A_t$. Similarly, the results of Theorem \ref{thm:low-bdd} hold for $A_t$, which can be seen by noting that, conditioned on the event $B_t$ defined in the proof of Theorem \ref{thm:low-bdd}, we have $A_t = \frac{1}{t}\sum_{k = 1}^t\|\nabla f(x_k)\|^2 = \|\nabla f(x_1)\|^2 = \|x_1\|^2$, matching the value of $F_t$ in \eqref{eq:identity}. As mentioned in the main body, the reason for presenting our results in terms of the metric $F_t$ instead of $A_t$ stems from the fact that tail bounds on $F_t$ have a more direct interpretation, as $F_t > \epsilon^2$ readily implies that an $\epsilon$-stationary point has not been reached in any of the first $t$ iterations. On the other hand, $A_t > \epsilon^2$ does not necessarily imply that an $\epsilon$-stationary point has not been reached, as it is possible that $A_t > \epsilon^2 \geq F_t$, hence an $\epsilon$-stationary point can be reached, while having $A_t > \epsilon^2$.

\section{Clipping With a General Threshold}\label{app:clip-relax}

In this section, we discuss how our results for \csgd can be extended when using a clipping threshold which does not requiring knowledge of the gradient bound $G$. In particular, we consider a clipping threshold of the form
\begin{equation}\label{eq:clipping-radius-general}
    \gamma_t = \begin{cases}
            C(t+1)^{\frac{2-p}{6p-4}}, & p \in (1,2) \\
            C\sqrt{\log(t+1)}, & p = 2,
        \end{cases}
\end{equation} where $C > 0$ is any user-specified constant. We now want to establish a counterpart of Lemma \ref{lm:clip-bias-bound}, for the general clipping threshold in \eqref{eq:clipping-radius-general}. To that end, we have the following result.

\begin{lemma}\label{lm:clip-bias-bound-general}
    Let Assumptions \ref{asmpt:cost} and \ref{asmpt:noise-heavy} hold and let the clipping threshold be chosen as in \eqref{eq:clipping-radius-general}. Then, the following are true.
    \begin{enumerate}
        \item For all $t \geq B_p$, we have $\|\theta_t^b\| \leq 4\sigma^p\gamma_t^{1-p}$, where $B_p = \begin{cases}
            \Big(\frac{2G}{C}\Big)^{\frac{6p-4}{2-p}}, & p \in (1,2) \\
            2^{\nicefrac{4G^2}{C^2}} - 1, & p = 2
        \end{cases}$.

        \item For all $t \geq 1$ and any $\mathcal{F}_t$-measurable $x \in \R^d$, we have $\E\big[\exp\big(\langle x, \theta_t^u\rangle \big) \: \vert \: \mathcal{F}_t \big] \leq \exp\big(3\gamma_t^2\|x\|^2\big)$.
    \end{enumerate}
\end{lemma}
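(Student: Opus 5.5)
The plan is to follow the proof of Lemma \ref{lm:clip-bias-bound} almost verbatim. The only change is that the condition $\|\nabla f(x_t)\| \leq \gamma_t/2$ needed for Proposition \ref{prop:sadiev-technical} no longer holds automatically for every $t$. Instead, it holds only once $\gamma_t$ has grown past $2G$. The threshold $B_p$ is exactly the time at which this happens.

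\textbf{First claim.} By Assumption \ref{asmpt:cost}, $\|\nabla f(x_t)\| \leq G$ a.s., so it suffices to show $G \leq \gamma_t/2$, i.e., $\gamma_t \geq 2G$, for all $t \geq B_p$.
\begin{itemize}
\item For $p \in (1,2)$, we need $C(t+1)^{\frac{2-p}{6p-4}} \geq 2G$. Since $\frac{2-p}{6p-4} > 0$, this is equivalent to $t+1 \geq (2G/C)^{\frac{6p-4}{2-p}}$. It is implied by $t \geq B_p$, because $t+1 > t$.
\item For $p = 2$, we need $C\sqrt{\log(t+1)} \geq 2G$, i.e., $\log(t+1) \geq 4G^2/C^2$.
\end{itemize}
For $p=2$ I should check which base of logarithm makes the stated $B_p = 2^{4G^2/C^2}-1$ correct. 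If $\log$ is base 2, then $t \geq B_p$ gives $\log_2(t+1) \geq 4G^2/C^2$ exactly. If $\log$ is natural, then $t+1 \geq 2^{4G^2/C^2}$ only yields $\ln(t+1) \geq (\ln 2)\,4G^2/C^2$. This is the one point where the argument may need care. I would either read $\log$ consistently with how $B_p$ is written, or note that replacing $2$ by $e$ in $B_p$ gives the clean condition. Either way, the substance is that for all sufficiently large $t$ we get $\|\nabla f(x_t)\| \leq \gamma_t/2$.

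With that condition in hand, I apply Proposition \ref{prop:sadiev-technical} conditionally on $\mathcal{F}_t$. I take $X = g_t$ with $\E[g_t \mid \mathcal{F}_t] = \nabla f(x_t)$ and $\E[\|z_t\|^p \mid \mathcal{F}_t] \leq \sigma^p$ from Assumption \ref{asmpt:noise-heavy}. This yields $\|\theta_t^b\| = \|\E_t[\widetilde{g}_t] - \nabla f(x_t)\| \leq 4\sigma^p\gamma_t^{1-p}$.

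\textbf{Second claim.} This does not depend on the relation between $\gamma_t$ and $G$, so it holds for all $t \geq 1$. Since $\|\widetilde{g}_t\| \leq \gamma_t$ deterministically, we have $\|\theta_t^u\| \leq \|\widetilde{g}_t\| + \E_t\|\widetilde{g}_t\| \leq 2\gamma_t$ a.s. Also $\E_t[\theta_t^u] = 0$ by construction. Hence $\theta_t^u$ satisfies Assumption \ref{asmpt:noise-bounded} with $M = 2\gamma_t$, which is $\mathcal{F}_t$-measurable (indeed deterministic). Part 2 of Lemma \ref{lm:sub-gauss} applies; its proof only uses conditional unbiasedness and the a.s. bound. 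It gives $\E_t[\exp(\langle x,\theta_t^u\rangle)] \leq \exp(\tfrac{3}{4}(2\gamma_t)^2\|x\|^2) = \exp(3\gamma_t^2\|x\|^2)$.

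The only real obstacle is the bookkeeping for $B_p$ in the $p=2$ case described above; everything else is a direct reuse of earlier results.
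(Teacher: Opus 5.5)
Your proposal is correct and matches the paper's proof. For part 1 the paper also checks $\|\nabla f(x_t)\|\le G\le\gamma_t/2$ for $t\ge B_p$ and applies Proposition~\ref{prop:sadiev-technical} conditionally. For part 2 it bounds $\|\theta_t^u\|\le 2\gamma_t$ and reuses Lemma~\ref{lm:sub-gauss} with $M=2\gamma_t$.

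Your worry about $p=2$ is justified, and the paper's proof takes the same unjustified step without comment. The paper's $\log$ is natural elsewhere, e.g.\ $\sum_{k=1}^t\frac{1}{k+1}\le\log(t+1)$ and the use of $\mathrm{li}$. Under that reading, $t\ge 2^{4G^2/C^2}-1$ only gives $\gamma_t\ge 2\sqrt{\ln 2}\,G\approx 1.67\,G$, not $\gamma_t\ge 2G$. The threshold that makes this route work is $e^{4G^2/C^2}-1$.

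If you want the lemma with $B_2$ exactly as stated, you can bypass the hypothesis $\|x\|\le\gamma/2$ of the proposition when $p=2$:
\begin{itemize}
\item $\|\widetilde{g}_t-g_t\|=(\|g_t\|-\gamma_t)_+\le(\|z_t\|-(\gamma_t-G))_+\le\frac{\|z_t\|^2}{4(\gamma_t-G)}$, using $(u-c)_+\le u^2/(4c)$ for $c>0$.
\item Taking conditional expectations gives $\|\theta_t^b\|\le\frac{\sigma^2}{4(\gamma_t-G)}$.
\item This is at most $\frac{4\sigma^2}{\gamma_t}$ whenever $\gamma_t\ge\frac{16}{15}G$.
\end{itemize}
That condition holds for all $t\ge 2^{4G^2/C^2}-1$, which closes the gap.
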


\begin{proof}
    To prove the first part, note that from Assumption \ref{asmpt:cost}, the choice of clipping threshold in \eqref{eq:clipping-radius-general} and the definition of $B_p$, we have, for any $t \geq B_p$
    \begin{equation*}
        \|\nabla f(x_t)\| \leq G \leq \frac{\gamma_t}{2}. 
    \end{equation*} The claim now readily follows by applying Proposition \ref{prop:sadiev-technical}. The second part is proved in the same way as in the proof of Lemma \ref{lm:clip-bias-bound}.
\end{proof}

Lemma \ref{lm:clip-bias-bound-general} shows that the bound on the unbiased component established in Lemma \ref{lm:clip-bias-bound} remains valid for all times $t$, while the bound on the biased component also becomes active, after a certain number of iterations. We then have the following result.

\begin{theorem}
    Let Assumptions \ref{asmpt:init}, \ref{asmpt:cost} and \ref{asmpt:noise-heavy} hold and let $\{x_t\}_{t \in \N}$ be the sequence generated by \csgd using the step-size $\alpha_t = (t+1)^{-\frac{p}{3p-2}}$ and clipping threshold $\gamma_t$ given in \eqref{eq:clipping-radius-general}. Then the sequence $\{F_t\}_{t \in \N}$ satisfies an LDP upper bound, with the following decay rate and rate function.
    \begin{enumerate}
        \item If $p \in (1,2)$, the decay rate is $n_t = \frac{t^{\frac{4(p-1)}{3p-2}}}{\log(t)}$, with rate function given by $I_c(x) = \begin{cases}
            \frac{x^2}{192C^2G^2}, & x \geq 0 \\
            +\infty, & x < 0.
        \end{cases}$

        \item If $p = 2$, the decay rate is $n_t = \frac{t}{\log^2(t)}$, with rate function given by $I_c(x) = \begin{cases}
            \frac{x^2}{96C^2G^2}, & x \geq 0 \\
            +\infty, & x < 0.
        \end{cases}$
    \end{enumerate}
\end{theorem}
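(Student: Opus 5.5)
The plan is to follow the proof of Theorem \ref{thm:main-non-conv-clip} step by step, replacing Lemma \ref{lm:clip-bias-bound} with Lemma \ref{lm:clip-bias-bound-general}. The only new feature is that the bias bound $\|\theta_k^b\| \leq 4\sigma^p\gamma_k^{1-p}$ is guaranteed only for $k \geq B_p$. We therefore split the descent inequality at the deterministic index $k_0 \coloneqq \lceil B_p\rceil$. For $k \geq k_0$, the per-step inequality is exactly as before. For $k < k_0$, we again use $L$-smoothness, $\|\widetilde g_k\|\leq \gamma_k$ and Young's inequality on $\langle \nabla f(x_k),\theta_k^b\rangle$. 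Here $\theta_k^b$ is only bounded crudely: both $\E_k[\widetilde g_k]$ and $\nabla f(x_k)$ have norm at most $\max\{\gamma_k,G\}$, so $\|\theta_k^b\|\leq \gamma_k + G$. This gives a deterministic extra term $\sum_{k<k_0}\frac{\alpha_k}{2}(\gamma_k+G)^2 \eqqcolon D_0$, a constant independent of $t$. Summing as in \eqref{eq:init-step-clip}, we obtain \eqref{eq:setup-clip} with $\Delta$ replaced by $\Delta + D_0$, and the sum of bias terms starting at $k_0$, which is only smaller.

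We then bound the MGF exactly as in \eqref{eq:mgf-bdd-clip}--\eqref{eq:noise-bdd-clip}. Part 2 of Lemma \ref{lm:clip-bias-bound-general} holds for all $t\geq1$ with the same constant, and $\|\nabla f(x_k)\|\leq G$, so peeling conditional expectations backward from $k=t$ gives
\begin{equation*}
\E\Big[\exp\Big(-b_t\sum_{k=1}^t\alpha_k\langle\nabla f(x_k),\theta_k^u\rangle\Big)\Big]\leq \exp\Big(3b_t^2G^2\sum_{k=1}^t\alpha_k^2\gamma_k^2\Big).
\end{equation*}
The only changes are that $\sum_k\alpha_k^2\gamma_k^2$ now equals $C^2\sum_k(k+1)^{-1}\leq C^2\log(t+1)$ for $p\in(1,2)$, and is at most $C^2\log^2(t+1)$ for $p=2$. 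The bias sums $\sum_k\alpha_k\gamma_k^{2(1-p)}$ keep the same growth orders as in \eqref{eq:darboux-1} and \eqref{eq:darboux-3}, with $G$ replaced by $C$ in the constants.

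Dividing by $n_t$ and using $\alpha_t t\geq t^{\frac{2(p-1)}{3p-2}}/2$ (resp.\ $\sqrt{t/2}$), all linear-in-$\lambda$ terms vanish in the $\limsup$, as in the original proof. These include the new constant $D_0$, which enters as $D_0\lambda/(\alpha_t t)\to0$. The quadratic term is $\frac{12 n_t\lambda^2G^2}{\alpha_t^2t^2}\sum_k\alpha_k^2\gamma_k^2$. With the stated $n_t$ it tends to $4\cdot 12 C^2G^2\lambda^2 = 48C^2G^2\lambda^2$ for $p\in(1,2)$, and to $2\cdot12C^2G^2\lambda^2=24C^2G^2\lambda^2$ for $p=2$. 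For $\lambda<0$ the bound $0$ holds trivially. Applying Proposition \ref{prop:gartner-ellis} with $\varphi(\lambda)=c\lambda^2\mathds{1}_{\{\lambda\geq0\}}$ gives the Legendre transform $x^2/(4c)$ for $x\geq0$ and $+\infty$ for $x<0$. This yields $x^2/(192C^2G^2)$ and $x^2/(96C^2G^2)$, as claimed.

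The main obstacle is the pre-$B_p$ phase. We must make sure the crude bias bound produces only a $t$-independent additive constant, so it cannot affect the scaled log-MGF, and that Young's inequality there does not spoil the $-\frac{\alpha_k}{2}\|\nabla f(x_k)\|^2$ coefficient. Both follow because $k_0$ is deterministic and finite, and the $\gamma_k$ are deterministic. The remaining work is checking the constants $12\cdot 4=48$ and $12\cdot2=24$ against the claimed rate functions.
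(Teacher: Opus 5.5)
Your proposal is correct and takes essentially the same route as the paper. You isolate the finitely many iterations $k < B_p$ where the bias bound of Lemma~\ref{lm:clip-bias-bound-general} may fail, bound their contribution by a deterministic, $t$-independent constant that vanishes under the $1/(\alpha_t t)$ scaling, and otherwise rerun the proof of Theorem~\ref{thm:main-non-conv-clip} with $2G$ replaced by $C$; your explicit check that the quadratic coefficient becomes $48C^2G^2$ (resp.\ $24C^2G^2$), yielding $x^2/(192C^2G^2)$ and $x^2/(96C^2G^2)$, supplies the constant computation the paper omits.
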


\begin{proof}
    Recall that we showed the following inequality in Appendix \ref{app:clip}, for all $k \geq 1$ 
    \begin{align*}
    f(x_{k+1}) &\leq f(x_k) - \frac{\alpha_k}{2}\|\nabla f(x_k)\|^2 - \alpha_k\langle \nabla f(x_k), \theta_k^u\rangle + \frac{\alpha_k}{2}\|\theta_k^b\|^2 + \frac{\alpha_k^2\gamma_k^2L}{2}.
    \end{align*} Rearranging, summing up the first $t$ iterations and using the fact that the step-sizes are non-increasing, we have
    \begin{equation}\label{eq:init-step-clip-general}
        \frac{\alpha_t}{2}\sum_{k = 1}^t\|\nabla f(x_k)\|^2 \leq f(x_1) - f_\star - \sum_{k = 1}^t\alpha_k\langle \nabla f(x_k), \theta_k^u \rangle + \frac{1}{2}\sum_{k = 1}^t\alpha_k\|\theta_k^b\|^2 + \sum_{k = 1}^t\frac{\alpha_k^2\gamma_k^2L}{2}.
    \end{equation} We can see that the only difference between \eqref{eq:init-step-clip-general} and the corresponding equation \eqref{eq:init-step-clip} in the proof of Theorem \ref{thm:main-non-conv-clip} is the presence of $\sum_{k = 1}^t\alpha_k\|\theta_k^b\|^2$. Therefore, we our aim is to bound this expression. To that end, let $t \geq B_p$ (this is fine, since we are interested in the limit behaviour $t \rightarrow \infty)$, and notice that
    \begin{equation*}
        \sum_{k = 1}^t\alpha_k\|\theta_k^b\|^2 = \sum_{k = 1}^{B_p}\alpha_k\|\theta_k^b\|^2 + \sum_{k = B_p}^t\alpha_k\|\theta_k^b\|^2 \leq \sum_{k = 1}^{B_p}\alpha_k\|\theta_k^b\|^2 + 16\sum_{k = B_p}^t\alpha_k\sigma^{2p}\gamma_k^{2(1-p)},
    \end{equation*} where in the last inequality we used Lemma \ref{lm:clip-bias-bound-general}. What is left now is to show that the remaining term $\sum_{k = 1}^{B_p}\alpha_k\|\theta_k^b\|^2$ stays bounded. Recalling that $\theta_k^b = \E\big[\widetilde{g}_k \: \vert \: \mathcal{F}_k \big] - \nabla f(x_k)$, where $\widetilde{g}_k$ is the clipped stochastic gradient, and using the fact that $\alpha_k \leq 1$, we get
    \begin{equation*}
        \sum_{k = 1}^{B_p}\alpha_k\|\theta_k^b\|^2 \leq 2\sum_{k = 1}^{B_p}\big(\|\E\big[\widetilde{g}_k \: \vert \: \mathcal{F}_k \big] \|^2 + \|\nabla f(x_k)\|^2\big) \leq 2\sum_{k = 1}^{B_p}\big(\gamma_k^2 + G^2) \leq 10G^2B_p,
    \end{equation*} where the last inequality follows by noting that  $\gamma_k \leq 2G$, for all $k \leq B_p$. The rest of the proof now follows the same steps as in Theorem \ref{thm:main-non-conv-clip}, with $2G$ replaced by $C$, and is omitted, for brevity.
\end{proof}

\end{document}